\documentclass{article}

\usepackage[preprint]{neurips_2026}

\usepackage[utf8]{inputenc} 
\usepackage[T1]{fontenc}    
\usepackage{hyperref}       
\usepackage{url}            
\usepackage{booktabs}       
\usepackage{amsfonts}       
\usepackage{nicefrac}       
\usepackage{microtype}      
\usepackage{xcolor}         

\usepackage{amsmath,amssymb,amsthm,mathtools}

\usepackage{enumitem}
\usepackage{subcaption}
\usepackage{graphicx}
\usepackage{setspace}
\usepackage{comment}
\usepackage{xcolor}
\usepackage{multirow}

\newtheorem{theorem}{Theorem}
\newtheorem{lemma}{Lemma}
\newtheorem{definition}{Definition}
\newtheorem{corollary}[theorem]{Corollary}

\newcommand{\ind}{\mathbf{1}}
\newcommand{\R}{\mathbb{R}}
\newcommand{\E}{\mathbb{E}}
\newcommand{\Pp}{\mathbb{P}}

\title{Post-Selection Distributional Model Evaluation}

\author{
Amirmohammad Farzaneh \qquad Osvaldo Simeone \\
Institute for Intelligent Networked Systems (INSI) \\
Northeastern University London \\
London, UK \\
\texttt{\{a.farzaneh,o.simeone\}@northeastern.edu}
}

\begin{document}

\maketitle

\begin{abstract}
Formal model evaluation methods typically certify that a model satisfies a prescribed target key performance indicator (KPI) level. However, in many applications, the relevant target KPI level may not be known a priori, and the user may instead wish to compare candidate models by analyzing the full trade-offs between performance and reliability achievable at test time by the models. This task, requiring the reliable estimate of the test-time KPI distributions, is made more complicated by the fact that the same data must often be used both to pre-select a subset of candidate models and to estimate their KPI distributions, causing a potential post-selection bias.
In this work, we introduce post-selection distributional model evaluation (PS-DME), a general framework for statistically valid distributional model assessment after arbitrary data-dependent model pre-selection. Building on e-values, PS-DME controls post-selection false coverage rate (FCR) for the distributional KPI estimates and we establish explicit conditions under which it is provably more sample efficient than a baseline method based on sample splitting.
Experiments on synthetic data, text-to-SQL decoding with large language models, and telecom network performance evaluation demonstrate that PS-DME enables reliable comparison of candidate configurations across a range of reliability levels, supporting the statistically reliable exploration of performance--reliability trade-offs.
\end{abstract}

\section{Introduction}
\label{sec:intro}

\begin{figure}[h!]
  \centering
\includegraphics[width = \textwidth]{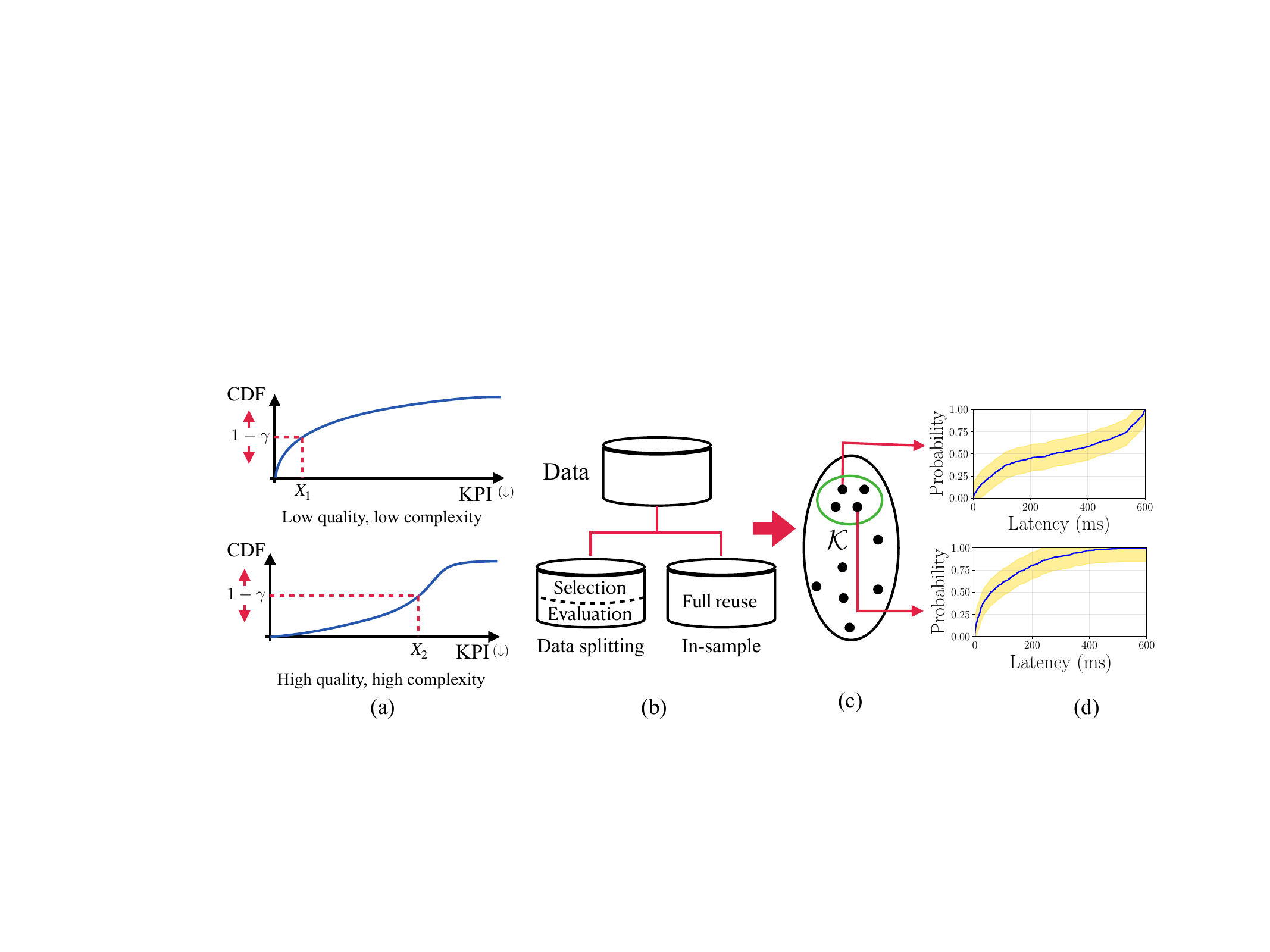}
  \caption{(a)~CDF of a negatively-oriented key performance indicator (KPI), such as the prefill latency for an LLM agent to be deployed on a device \citep{huang2026scale}, for two  model configurations.  For a test-time reliability level $1-\gamma$, knowing the CDF for each configuration allows the user to compare achievable performance across different configurations. Furthermore, it also supports the exploration of different reliability levels $1-\gamma$.
(b)~Two strategies for handling model pre-selection, namely data splitting and in-sample inference.
(c)-(d) Illustration of the proposed post-selection distributional model evaluation (PS-DME), which applies in-sample pre-selection: (c)~A user-defined selection strategy identifies a subset $\mathcal{K}$ of promising configurations among the $K$ candidates;
(d)~For all the configurations in set $\mathcal{K}$, PS-DME constructs confidence bands for the CDFs of the KPI of interest with guaranteed \textit{false coverage rate} (FCR), enabling reliable comparison of performance–reliability trade-offs across configurations.}
  \label{fig:motivating_example}
\end{figure}

Current formal model selection and evaluation strategies, grounded in \textit{multiple hypothesis testing} (MHT) \citep{angelopoulos2025learn, farzaneh2025ensuring}, control the probability that the risk measure of interest  exceeds a maximum tolerated target value. However, in practice, a suitable target risk level may not be known in advance, and the user may instead wish to explore different \textit{trade-offs} between test-time performance and reliability. As studied in this paper, this requires estimating the full test-time  distribution of the \emph{key performance indicator} (KPI) of interest for the candidate models.

To exemplify the problem of interest, consider a large language model (LLM) agent that can be deployed on a mobile device with different configurations $k = 1, \ldots, K$. As studied in \citep{huang2026scale}, these may correspond to models obtained by progressively pruning a base model. Accordingly, the available configurations range from high-quality and high-latency models to lower-quality and lower-latency models. Latency here refers to the  prefill delay, which dictates the time to first token. Existing formal model selection methods \citep{angelopoulos2025learn}, reviewed in \citep{farzaneh2025ensuring}, require the a priori specification of a latency target. However, as pointed out in \citep{huang2026scale}, the latency target may change as the use-cases evolve, and there is a need to understand the full trade-off between performance and reliability.

For example, consider two LLM configurations -- a lower-quality, simpler, model and a higher-quality, more complex, model. As illustrated in Fig. \ref{fig:motivating_example}(a), suppose that the user knew the \textit{cumulative distribution function} (CDF) of the latency for the two different configurations. Note that the higher quality model, due to its higher complexity, tends to have a larger latency. If the user tolerates a maximum test-time failure probability $\gamma$, it may prefer the higher-quality model, but only  as long as the benefits accrued from higher quality outweigh the larger latency ($X_2>X_1$ in the figure).  More generally, the user may also be interested to explore different values of the failure probability $\gamma$ in order to obtain more favorable operating points on the trade-off between performance (latency) and test-time reliability $1-\gamma$.


The model selection strategy outlined in the previous example is challenged by the fact that the CDFs of the KPI of interest, e.g., the latency, are not known and must be estimated from data. Furthermore, the configurations being evaluated are often pre-selected from a large pool of candidates to satisfy requirements or criteria such as cost or expected accuracy. Because this pre-selection must generally rely on the
same data later used for CDF estimation, naively
applying classical confidence bands in this setting yields overly
optimistic estimates (see, e.g.,~\citep{kuchibhotla2025post}).

We address these challenges by proposing \textit{post-selection distributional model evaluation} (PS-DME), a novel formal model evaluation protocol that supports the following steps:
\begin{enumerate}[label=\textbf{\arabic*)}, leftmargin=*]
    \item \textbf{Arbitrary model pre-selection:} As illustrated in Fig. \ref{fig:motivating_example}(c), the user can pre-select a subset $\mathcal{K}\subseteq\{1, \ldots, K\}$ of promising configurations using any arbitrary and possibly data-dependent procedure. As shown in Fig.~\ref{fig:motivating_example}(b), we study both \emph{data splitting}, where one portion of the data is used for pre-selection and the remaining portion is reserved for
distributional inference, and \emph{in-sample} inference, in which the same data serve both for model
screening and for uncertainty
quantification~\citep{chakraborty2026comparing}. 
    \item \textbf{Distributional model evaluation:} For each pre-selected configuration $k \in \mathcal{K}$, as illustrated in Fig.~\ref{fig:motivating_example}(d), PS-DME constructs confidence bands for the CDF of the KPI of interest. These bands quantify the uncertainty in the entire distribution of the performance metric, enabling the reliable estimation of quantities such as latency quantiles or outage probabilities (see Fig. \ref{fig:motivating_example}(a)).
    \item \textbf{Model selection based on performance-reliability trade-off with statistical guarantees:} The estimated CDF confidence bands enable the user to compare configurations by examining the trade-off between achievable performance and reliability levels. This allows the user to select a final configuration that strikes the desired balance between performance and reliability, while accounting for statistical uncertainty.
\end{enumerate}

The rest of this paper is organized as follows. In Sec.~\ref{sec:problem_setup}, we formalize the post-selection distributional model evaluation problem, and introduce the FCR-based validity criterion adopted throughout the paper. Sec.~\ref{sec:method} then presents the proposed PS-DME framework together with the sample-splitting baseline, establishes their statistical guarantees, and compares their statistical efficiency through explicit confidence band width comparisons. Sec.~\ref{sec:related_work} positions our contribution with respect to prior work. Sec.~\ref{sec:experiments} validates the proposed methodology on synthetic data, LLM decoding, and telecom performance evaluation. Sec.~\ref{sec:conclusion} concludes the paper with a discussion of the main findings and directions for future work, while the appendices collect proofs and extensions.

\section{Problem Setup}
\label{sec:problem_setup}

For a given system, we consider $K$ candidate configurations indexed by integer $k \in \{1,\dots,K\}$. These may correspond to different hyperparameter settings, decoding strategies, or precision levels. Henceforth, we will refer to each configuration $k$ as a \textit{model}.
For each model $k$, we observe $n_k$ i.i.d.\ realizations of a scalar, negatively-oriented, \textit{key performance indicator} (KPI),
\begin{equation}
\label{eq:dataset}
\mathcal{D}_k = \{X_{k,1},\dots,X_{k,n_k}\},\quad \text{with} \;X_{k,i}\ \stackrel{\mathrm{i.i.d.}}{\sim}\ P_k,
\end{equation}
where $P_k$ is an unknown distribution with CDF
$F_k(x) \ =\ \Pp_{k}\!\left( X \le x \right)$. The datasets $\mathcal{D}_k$ can be arbitrarily correlated. For instance, in the case with the same number of data points for all configurations, i.e., $n_k = n$, the $i$-th measures $\{X_{k,i}\}_{k = 1}^K$ may be a function of the same observations, with observations being i.i.d. across index $i = 1, \ldots, n$.

Let $\widehat F_k(\cdot)$ denote the empirical CDF formed from dataset $\mathcal{D}_k$ as
\begin{equation}
\label{eq:emp_cdf}
\widehat F_k(x) \ =\ \frac{1}{n_k}\sum_{i=1}^{n_k}\ind\{X_{k,i}\le x\}.
\end{equation}

We study an arbitrary two-step model evaluation pipeline consisting of the following phases:

\textbf{1) Data-based model pre-selection:} Using data $\mathcal{D} = \cup_{k = 1}^{K}\mathcal{D}_k$ for all $k$ configurations, an arbitrary (possibly unknown) data-dependent pre-selection rule produces a subset
\begin{equation}
\mathcal{K} \ =\ S\!\left(\mathcal{D}\right)\ \subseteq\ \{1,\dots,K\}
\end{equation}
of models.

\textbf{2) Reliable distributional model evaluation:} For each selected model $k\in \mathcal{K}$, we aim to report lower and upper bounds on the CDF $F_k(x)$, i.e.,
\begin{equation}
L_k(x)\ \le\ F_k(x)\ \le\ U_k(x),
\end{equation}
that contain the true CDF uniformly with a user-defined probability.

Formally, for each selected model $k\in \mathcal{K}$, define the miscoverage event
\begin{equation}
\label{eq:not_covered}
\mathcal{O}_k \ =\ \Bigl\{ \exists x\in\R:\ F_k(x)\notin [L_k(x),U_k(x)] \Bigr\},
\end{equation}
i.e., the event that there exists a KPI value $x$ for which the interval $[L_k(x),U_k(x)] $ does not contain its true CDF value $F_k(x)$ (see Appendix \ref{app:illustrations} for an illustration).
\textit{Post-selection statistical validity} is measured via the \textit{false coverage proportion} (FCP) \citep{xu2024post}. The FCP is defined as the fraction of  pre-selected models in set $\mathcal{K}$ for which the miscoverage event (\ref{eq:not_covered}) holds true, i.e.,
\begin{equation}
\label{eq:def:fcp}
\mathrm{FCP}\ =\ \frac{1}{\max\{|\mathcal{K}|,1\}}\sum_{k\in \mathcal{K}}\ind\{\mathcal{O}_k\}.
\end{equation}
The \textit{false coverage rate} (FCR) is the expectation of the FCP, i.e.,
\begin{equation}
\label{eq:def:fcr}
\mathrm{FCR}\ =\ \E\!\left[\mathrm{FCP}\right],
\end{equation}
where $\ind\{\cdot\}$ is the indicator function, and the expectation is taken with respect to the joint distribution of data 
$\mathcal{D}$. Note that the selected set $\mathcal{K}=S(\mathcal{D})$ is itself data-dependent, and hence its cardinality $|\mathcal{K}|$ is a random variable. Our goal is to guarantee the condition
\begin{equation}
\label{eq:fcr_guarantee}
\mathrm{FCR}\le \delta
\end{equation}
for a user-defined target probability $\delta\in(0,1)$.

\section{Post-Selection Distributional Model Evaluation}
\label{sec:method}

In order to ensure the condition (\ref{eq:fcr_guarantee}), a conventional approach would adopt a sample splitting strategy, followed by the evaluation of standard uniform confidence intervals for CDFs \citep{dvoretzky1956asymptotic, massart1990tight}. After briefly reviewing this baseline method in Sec. \ref{sec:sample_split}, we introduce PS-DME in Sec. \ref{sec:in-sample}. As introduced in Sec. \ref{sec:intro}, PS-DME leverages the entire dataset $\mathcal{D}$ for both model selection and distributional model evaluation tasks (see Fig. \ref{fig:motivating_example}).

\subsection{Sample-Splitting Distributional Model Evaluation}
\label{sec:sample_split}

For each configuration $k \in \{1,\dots,K\}$, suppose the available data
$\mathcal{D}_k = \{X_{k,1},\dots,X_{k,n_k}\}$ in (\ref{eq:dataset}) are partitioned into two
disjoint subsets
$\mathcal{D}_k^{\mathrm{sel}}$
and
$\mathcal{D}_k^{\mathrm{eval}}$
of sizes $n_k^{\mathrm{sel}}$ and $n_k^{\mathrm{eval}}$ respectively. In the baseline methodology reviewed here, referred to as \textit{sample-splitting post-selection distributional model evaluation} (SS-DME),
the pre-selection subset $\mathcal{D}_k^{\mathrm{sel}}$ is used exclusively for model selection,
while the model evaluation subset $\mathcal{D}_k^{\mathrm{eval}}$ is reserved for distributional inference.

\textbf{1) Model selection:} The selection rule
$\mathcal{K}
=
S\!\left(
\mathcal{D}^{\mathrm{sel}}
\right)
\subseteq
\{1,\dots,K\}$
leverages the dataset
$\mathcal{D}^{\mathrm{sel}}
=
\cup_{k=1}^K \mathcal{D}_k^{\mathrm{sel}}$ using any arbitrary criterion. For instance, the function $S(\cdot)$ may select the $\bar{K} \leq K$ models with the smallest empirical KPIs
$\hat{X}_k^\text{sel}= \sum_{i = 1}^{n_k^\text{sel}} X_{k,i}/n_k^\text{sel}$.

\textbf{2) Distributional inference:} For each selected configuration $k \in \mathcal{K}$, SS-DME constructs
a uniform confidence band for the true CDF $F_k (\cdot)$
using only the inference subset
$\mathcal D_k^{\mathrm{eval}}$. This is elaborated on in the rest of this subsection.

Let $\widehat F_k^{\mathrm{eval}}(\cdot)$ denote the empirical CDF
constructed from data $\mathcal D_k^{\mathrm{eval}}$ using \eqref{eq:emp_cdf}.
By the Dvoretzky--Kiefer--Wolfowitz (DKW) inequality \citep{dvoretzky1956asymptotic},
for any $\varepsilon > 0$ we have
\begin{equation}
\label{eq:dkw_split}
\mathbb P\!\left(
\sup_{x \in \mathbb R}
\big|
\widehat F_k^{\mathrm{eval}}(x) - F_k(x)
\big|
>
\varepsilon
\right)
\le
2 \exp\!\left(-2 n_k^{\mathrm{eval}} \varepsilon^2 \right).
\end{equation}
Using this inequality, SS-DME constructs the confidence bounds as
\begin{subequations}\label{eq:split_bounds}
\begin{align}
L_k(x)
&=
\max\!\{
0,\,
\widehat F_k^{\mathrm{eval}}(x)
-
\varepsilon_k
\},
\label{eq:split_lower}
\\
\text{and}\quad
U_k(x)
&=
\min\!\{
1,\,
\widehat F_k^{\mathrm{eval}}(x)
+
\varepsilon_k
\},
\label{eq:split_higher}
\end{align}
\end{subequations}
where 
\begin{equation}
\label{eq:width_split_new}
\varepsilon_k
=
\sqrt{
\frac{1}{2 n_k^{\mathrm{eval}}}
\log\!\left(
\frac{2}{\delta}
\right)
}.
\end{equation}
SS-DME provides the following guarantee.

\begin{lemma}
\label{theorem:validity_split}
For any pre-selection rule $S(\cdot)$, the confidence bands $\mathcal{C}_k = [L_k(x), U_k(x)]$ produced by SS-DME, where $L_k(x)$ and $U_k(x)$ are defined by
\eqref{eq:split_lower} and \eqref{eq:split_higher}, respectively,
satisfy the requirement $\mathrm{FCR} \le \delta$.
\end{lemma}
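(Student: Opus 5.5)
The plan is to condition on the selection data and exploit the independence between $\mathcal{D}^{\mathrm{sel}}$ and $\mathcal{D}^{\mathrm{eval}}$. First, we need to be careful about the correlation structure. Within each configuration $k$, the samples are i.i.d., so $\mathcal{D}_k^{\mathrm{sel}}$ and $\mathcal{D}_k^{\mathrm{eval}}$ are independent. Across configurations, however, the datasets may be correlated. We will therefore assume, as is implicit in SS-DME, that the split is made at the level of the observation index $i$ (for instance, the same index set is used for every $k$ in the shared-observation case). With this convention, the pooled selection data $\mathcal{D}^{\mathrm{sel}}$ is independent of each evaluation set $\mathcal{D}_k^{\mathrm{eval}}$. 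Making this independence explicit is the only delicate point, and we will state it as the standing assumption under which the lemma holds.

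Next, we write the FCR as an expectation of a sum over all models with a selection weight:
\begin{equation*}
\mathrm{FCR} = \E\left[\sum_{k=1}^K \frac{\ind\{k\in\mathcal{K}\}}{\max\{|\mathcal{K}|,1\}}\,\ind\{\mathcal{O}_k\}\right].
\end{equation*}
Here the weight $w_k = \ind\{k\in\mathcal{K}\}/\max\{|\mathcal{K}|,1\}$ is a function of $\mathcal{D}^{\mathrm{sel}}$ only. We then condition on $\mathcal{D}^{\mathrm{sel}}$ and use the tower property. By independence, the conditional probability of $\mathcal{O}_k$ given $\mathcal{D}^{\mathrm{sel}}$ equals its unconditional probability.

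To bound that probability, note that since $F_k(x)\in[0,1]$, clipping at $0$ and $1$ cannot create miscoverage. Hence $\mathcal{O}_k \subseteq \{\sup_x|\widehat F_k^{\mathrm{eval}}(x)-F_k(x)|>\varepsilon_k\}$, with a strict inequality because the band is closed. The DKW bound \eqref{eq:dkw_split}, with $\varepsilon_k$ chosen as in \eqref{eq:width_split_new}, then gives $\Pp(\mathcal{O}_k)\le 2\exp(-\log(2/\delta))=\delta$.

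Substituting this bound yields
\begin{equation*}
\mathrm{FCR}\le \delta\,\E\left[\sum_{k} w_k\right] = \delta\,\E\left[\frac{|\mathcal{K}|}{\max\{|\mathcal{K}|,1\}}\right]\le\delta,
\end{equation*}
which holds for every selection rule $S(\cdot)$. This completes the plan. The main obstacle is justifying the conditional independence in the first step; the remaining steps are routine applications of DKW and the tower property.
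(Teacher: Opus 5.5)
Your proposal is correct and follows the same route as the paper: condition on the selection step (you condition on $\mathcal{D}^{\mathrm{sel}}$, the paper on $\mathcal{K}$), bound each per-model miscoverage probability by $\delta$ via DKW with the choice \eqref{eq:width_split_new}, and average over the selected set. Your explicit assumption that $\mathcal{D}^{\mathrm{sel}}$ is independent of each $\mathcal{D}_k^{\mathrm{eval}}$ is exactly what the paper uses implicitly in passing from \eqref{eq:fcr:b} to \eqref{eq:fcr:c}; stating it, and checking that clipping at $0$ and $1$ cannot create miscoverage, makes your version more careful than the paper's.
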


A proof of Lemma \ref{theorem:validity_split} can be found in Appendix \ref{app:proof_lemma_1}.

\subsection{In-Sample Distributional Model Evaluation}
\label{sec:in-sample}

The SS-DME strategy described in Sec.~\ref{sec:sample_split}
ensures validity by enforcing independence between
model selection and distributional inference.
However, this separation reduces the effective sample size
available for constructing confidence bands, potentially leading to
occasionally wide intervals. In this subsection, we introduce PS-DME, which enables full-data reuse while maintaining
the requirement (\ref{eq:fcr_guarantee}) on the FCR. 
To achieve this, PS-DME adopts the framework of e-values \citep{ramdas2025hypothesis} to construct confidence intervals with post-selection validity \citep{xu2024post}.

E-values are statistics used to test null hypotheses that have additional properties of robustness as compared to p-values, including, notably, the support of the post-hoc selection of the significance level \citep{vovk2021values, koning2025posthocalphahypothesistesting}. An e-value can be constructed from a p-value via the application of an e-calibrator function, which is defined as follows.

\begin{definition}[e-calibrator]
A function $f:[0,1]\to[0,\infty]$ is called an e-calibrator if it satisfies the following properties:
\textit{(i)} $f$ is nonincreasing and upper semicontinuous; and
\textit{(ii)} $\displaystyle \int_0^1 f(u)\,du \le 1$.

\end{definition}


For example, a commonly used family of e-calibrators is given by the functions
\begin{equation}
\label{eq:power_e-calibrator}
f_\tau(p) = (1-\tau) \, p^{-\tau}, \qquad \tau \in (0,1).
\end{equation}
An optimal e-calibrator would yield the largest e-value across all possible p-values $p\in[0,1]$. However, within the family (\ref{eq:power_e-calibrator}), no single value of parameter $\tau$ yields the largest e-value $f_\tau(p)$ for all possible values of $p\in [0,1]$ \citep{vovk2021values}. Therefore, the optimal choice of $\tau$ is non-trivial, and a common selection is $\tau=1/2$ \citep{yanchenko2025hypothesis}.

For any e-calibrator $f(\cdot)$, define its generalized inverse as
$f^{-1}(t)
=
\sup\{u\in[0,1]: f(u)\ge t\}$.

For each configuration $k \in \{1,\dots,K\}$,
let $\widehat F_k(\cdot)$ denote the empirical CDF formed by \eqref{eq:emp_cdf}
using the entire dataset $\mathcal D_k$. For an arbitrary e-calibrator $f(\cdot)$, PS-DME produces the bounds
\begin{subequations}\label{eq:in_sample_bounds}
\begin{align}
L_k(x)
&=
\max\!\{
0,\,
\widehat F_k(x)
-
\varepsilon_k
\},
\label{eq:e_lower}
\\
\text{and}\quad
U_k(x)
&=
\min\!\{
1,\,
\widehat F_k(x)
+
\varepsilon_k
\},
\label{eq:e-upper}
\end{align}
\end{subequations}
with band width
\begin{equation}
\varepsilon_k
=
\sqrt{
\frac{1}{2n_k}
\log\!\left(
\frac{2}{
f^{-1}\!\left(\frac{K}{\delta |\mathcal{K}|}\right)
}
\right)
}.
\label{eq:width_general_new}
\end{equation}

Using the properties of e-values \citep{vovk2021values}, PS-DME can be shown to have the following validity property, which is proved in Appendix \ref{app:theorem_proof}.

\begin{theorem}
\label{theorem:validity_full}
For any selection rule $S(\cdot)$ and for any e-calibrator $f(\cdot)$,
the confidence bands in (\ref{eq:e_lower}) and (\ref{eq:e-upper}) produced by PS-DME
satisfy the requirement (\ref{eq:fcr_guarantee}), i.e., $\mathrm{FCR} \le \delta$.
\end{theorem}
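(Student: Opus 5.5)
The plan is to follow the e-BY / e-value FCR argument of \citet{xu2024post}. For each configuration $k$, define the p-value-like statistic
\begin{equation*}
p_k = 2\exp\!\Bigl(-2 n_k \sup_{x}\bigl|\widehat F_k(x)-F_k(x)\bigr|^2\Bigr)\wedge 1 .
\end{equation*}
By the DKW inequality in the form \eqref{eq:dkw_split} (with Massart's constant), $\Pp(p_k\le u)\le u$ for all $u\in[0,1]$, so $p_k$ is a valid p-value. The key observation is that the miscoverage event $\mathcal{O}_k$ in \eqref{eq:not_covered} can only occur if $\sup_x|\widehat F_k(x)-F_k(x)|>\varepsilon_k$; the clipping at $0$ and $1$ in \eqref{eq:e_lower}--\eqref{eq:e-upper} cannot create miscoverage, since $F_k\in[0,1]$. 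Using \eqref{eq:width_general_new}, this event is equivalent to
\begin{equation*}
p_k < f^{-1}\!\bigl(K/(\delta|\mathcal{K}|)\bigr).
\end{equation*}

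Next, define $E_k=f(p_k)$. Since $f$ is nonincreasing with $\int_0^1 f\le 1$ and $p_k$ is superuniform,
\begin{equation*}
\E[E_k]\le\int_0^1 f(u)\,du\le 1 .
\end{equation*}
I would justify this by writing $\E[f(p_k)]=\int_0^\infty \Pp(f(p_k)\ge t)\,dt$ and using $\{f(p_k)\ge t\}\subseteq\{p_k\le f^{-1}(t)\}$, which relies on upper semicontinuity and monotonicity. This yields $\int_0^\infty f^{-1}(t)\,dt=\int_0^1 f$. The same inclusion, read in reverse, is what converts the miscoverage condition into an e-value condition. If $p_k<f^{-1}(t)$ with $t=K/(\delta|\mathcal K|)$, then there exists $u>p_k$ with $f(u)\ge t$, and monotonicity gives $f(p_k)\ge f(u)\ge t$. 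Hence
\begin{equation*}
\ind\{\mathcal{O}_k\}\le \ind\bigl\{E_k\ge K/(\delta|\mathcal{K}|)\bigr\}\le \frac{\delta|\mathcal{K}|}{K}E_k .
\end{equation*}
The last step is Markov-type pointwise: an indicator $\ind\{E\ge c\}$ is at most $E/c$ because $E\ge0$.

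Finally, I would sum over the selected set. Whenever $\mathcal K\neq\emptyset$,
\begin{equation*}
\mathrm{FCP}=\frac{1}{|\mathcal K|}\sum_{k\in\mathcal K}\ind\{\mathcal O_k\}\le \frac{\delta}{K}\sum_{k\in\mathcal K}E_k\le\frac{\delta}{K}\sum_{k=1}^K E_k ,
\end{equation*}
and $\mathrm{FCP}=0$ when $\mathcal K=\emptyset$. Taking expectations and using $\E[E_k]\le1$ gives $\mathrm{FCR}\le\delta$. This holds for arbitrary selection rules $S$, because the data-dependence of $\mathcal K$ is removed by bounding the sum over $\mathcal K$ by the sum over all $K$ models. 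It also holds under arbitrary dependence across the $\mathcal D_k$, since only linearity of expectation is used.

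I expect the main obstacle to be the careful handling of the generalized inverse $f^{-1}$. This includes edge cases: $f^{-1}(t)=0$ or $\sup\emptyset$, which give an infinite width and thus trivial coverage; ties at the boundary between strict and non-strict inequalities; and $f$ possibly taking the value $\infty$ at $0$. I would address these first by fixing conventions, namely $\sup\emptyset=0$ and $\log(2/0)=\infty$. Any boundary-tie mismatch would then be absorbed either by the strict-versus-nonstrict slack in the DKW event or by upper semicontinuity.
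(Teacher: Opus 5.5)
Your proposal is correct and follows essentially the same argument as the paper: a DKW-based p-value at the true CDF $F_k$, calibration to an e-value $E_k=f(p_k)$, the pointwise bound $\ind\{\mathcal O_k\}\le \frac{\delta|\mathcal K|}{K}E_k$, and bounding the sum over the selected set $\mathcal K$ by the sum over all $K$ models before taking expectations. Your one-sided chain $\mathcal O_k\subseteq\{p_k<f^{-1}(t)\}\subseteq\{E_k\ge t\}$, together with your explicit check that $\E[f(p_k)]\le 1$, is slightly more careful than the paper, which states an equality $\ind\{\mathcal O_k\}=\ind\{E_k(F_k)\ge K/(\delta|\mathcal K|)\}$ where only the inequality is needed.
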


An alternative implementation of PS-DME that leverages Berk-Jones (BJ) confidence bounds \citep{berk1979goodness} instead of the DKW inequality (\ref{eq:dkw_split}) is presented in Appendix \ref{app:generalization}. BJ confidence bounds are implicitly defined based on likelihood ratios, yielding input-dependent confidence band widths. This contrasts with the sup-norm form of the DKW statistic (\ref{eq:dkw_split}), which yields efficient, symmetric, and input-independent confidence widths (\ref{eq:width_general_new}).

\subsection{Comparing SS-DME and PS-DME}
\label{sec:compare_methods}

Thanks to its use of the full dataset for distributional inference, PS-DME can be more efficient than SS-DME. In fact, a direct comparison of the band widths \eqref{eq:width_general_new} and \eqref{eq:width_split_new} yields an explicit
condition under which PS-DME yields a narrower band than SS-DME as follows.

\begin{lemma}
\label{lemma:compare_ps_dme_split}
For a pre-selection set size $|\mathcal{K}|$, the band width of PS-DME (\ref{eq:width_general_new}) is smaller than that of SS-DME (\ref{eq:width_split_new})
if and only if the inequality
\begin{equation}
\label{eq:compare_width_general_vs_split_rho}
\frac{n_k^\mathrm{eval}}{n_k}
<
\frac{
\log\!\left(\frac{2}{\delta}\right)
}{
\log\!\left(\frac{2}{f^{-1}\!\left(\frac{K}{\delta|\mathcal K|}\right)}\right)
}
\end{equation}
holds.
\end{lemma}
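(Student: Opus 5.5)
The plan is to compare the two closed-form widths \eqref{eq:width_general_new} and \eqref{eq:width_split_new} directly. Both are nonnegative square roots, so the PS-DME width is smaller than the SS-DME width if and only if its square is smaller. Squaring gives the equivalent condition
\begin{equation*}
\frac{1}{2n_k}\log\!\left(\frac{2}{f^{-1}\!\left(\frac{K}{\delta|\mathcal K|}\right)}\right)
<
\frac{1}{2n_k^{\mathrm{eval}}}\log\!\left(\frac{2}{\delta}\right).
\end{equation*}
Multiplying both sides by the positive quantity $2n_k^{\mathrm{eval}}$ preserves the inequality, and it leaves the ratio $n_k^{\mathrm{eval}}/n_k$ multiplying the PS-DME logarithm on the left.

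The only delicate step is dividing by $\log\bigl(2/f^{-1}(K/(\delta|\mathcal K|))\bigr)$, which requires this quantity to be strictly positive. First, $K/(\delta|\mathcal K|)\ge 1/\delta>1$ whenever $|\mathcal K|\ge1$. Next, an e-calibrator cannot be at least $t>1$ on all of $[0,1]$, because then $\int_0^1 f\ge t>1$, contradicting property (ii). Since $f$ is nonincreasing, the superlevel set $\{u: f(u)\ge t\}$ is an interval starting at $0$ of length at most $1/t$. Hence $f^{-1}(t)\le 1/t<1<2$, and the logarithm is positive. The edge case $f^{-1}(t)=0$, or an empty superlevel set, makes the PS-DME width infinite. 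I would handle it by the convention that the right-hand side of \eqref{eq:compare_width_general_vs_split_rho} is then $0$, so both sides of the equivalence are false.

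With positivity established, dividing by the PS-DME logarithm yields exactly \eqref{eq:compare_width_general_vs_split_rho}. Every step is an equivalence: squaring nonnegative quantities, and multiplying or dividing by positive constants. Reversing the chain therefore gives the ``if'' direction, which completes the proof. I expect the main, though mild, obstacle to be the careful justification of $f^{-1}(t)<2$ and the degenerate case of an infinite width.
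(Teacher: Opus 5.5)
Your proposal is correct and takes the same approach as the paper, which justifies the lemma only as a direct comparison of the two widths: square both widths, clear the positive factor $2n_k^{\mathrm{eval}}$, and divide by the PS-DME logarithm. The positivity step you single out as the main obstacle is immediate and does not need the integral constraint: $f^{-1}(t)\le 1$ because it is a supremum over a subset of $[0,1]$, so $\log\bigl(2/f^{-1}(\cdot)\bigr)\ge\log 2>0$. Your treatment of the degenerate infinite-width case is a sensible detail that the paper leaves implicit.
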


The condition (\ref{eq:compare_width_general_vs_split_rho}) highlights the fundamental trade-off between the two methods.
PS-DME benefits from using the full sample size $n_k$ for both pre-selection and inference, but it pays an additional
post-selection correction factor captured by the term $f^{-1}\!\left(K/{\delta|\mathcal K|}\right)$.
In contrast, SS-DME avoids this correction by enforcing separation between
pre-selection and inference, but it uses only the reduced sample size
$n_k^{\mathrm{eval}}$.

Using the e-calibrator family (\ref{eq:power_e-calibrator}), the result in Lemma \ref{lemma:compare_ps_dme_split} can be specialized as follows (see Appendix \ref{app:cor_proof} for a proof).

\begin{corollary}
\label{cor:vs_condition}
Assume that the e-calibrator belongs to the power family
\eqref{eq:power_e-calibrator}.
A necessary condition for the PS-DME width (\ref{eq:width_general_new}) to be smaller than the
SS-DME width (\ref{eq:width_split_new}) is
\begin{equation}
\label{eq:vs_condition}
\frac{n_k^\mathrm{eval}}{n_k}
<
\frac{
\log\!\left(\frac{2}{\delta}\right)
}{
\log 2
-
W\!\left(-\frac{\delta|\mathcal K|}{eK}\right)
},
\end{equation}
where $W(\cdot)$ denotes the lower real branch of the Lambert $W$ function,
i.e., the inverse of the function $f(w) =  w e^{w}$ on the interval
$[-1/e,0)$. Furthermore, when the candidate set size $|\mathcal{K}|$ is fixed and does not depend on the data $\mathcal{D}_k$, if condition (\ref{eq:vs_condition}) holds, there exists a value of the constant $\tau$ for which the width (\ref{eq:width_general_new}) of PS-DME is smaller than the width of SS-DME (\ref{eq:width_split_new}), namely
\begin{equation}
\label{eq:optimal_calibrator}
    \tau = 1 + \frac{1}{W\!\left(-\frac{\delta|\mathcal{K}|}{eK}\right)}.
\end{equation}
\end{corollary}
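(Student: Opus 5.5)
The plan is to specialize Lemma~\ref{lemma:compare_ps_dme_split} to the power family \eqref{eq:power_e-calibrator} and then optimize over $\tau$. Write $c = K/(\delta|\mathcal K|)$. Since $|\mathcal K|\le K$ and $\delta<1$, we have $c\ge 1/\delta>1$.

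\textbf{Step 1: the generalized inverse.} For $f_\tau(u)=(1-\tau)u^{-\tau}$, which is decreasing on $(0,1]$, we have $f_\tau(u)\ge t$ if and only if $u\le ((1-\tau)/t)^{1/\tau}$. Since $c>1>1-\tau$, this threshold lies in $(0,1)$, so
\[
f_\tau^{-1}(c)=\left(\frac{1-\tau}{c}\right)^{1/\tau}.
\]
Hence $\log\!\left(2/f_\tau^{-1}(c)\right)=\log 2 - g(\tau)$, where
\[
g(\tau)=\frac{1}{\tau}\log\frac{1-\tau}{c} .
\]
Both $\log(2/\delta)>0$ and $\log 2-g(\tau)>0$, because $g<0$. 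So the right-hand side of \eqref{eq:compare_width_general_vs_split_rho} is largest when $g(\tau)$ is largest. By Lemma~\ref{lemma:compare_ps_dme_split}, a necessary condition for PS-DME to be narrower with some $\tau$ is therefore
\[
\frac{n_k^{\mathrm{eval}}}{n_k}<\frac{\log(2/\delta)}{\log 2-\sup_{\tau\in(0,1)} g(\tau)}.
\]

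\textbf{Step 2: maximize $g$ over $(0,1)$.} At the endpoints, $g(\tau)\to-\infty$ as $\tau\to0^+$, because $\log((1-\tau)/c)\to-\log c<0$. Also $g(\tau)\to-\infty$ as $\tau\to1^-$. So the supremum is attained at an interior critical point.

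Setting $g'(\tau)=0$ gives
\[
\log\frac{1-\tau}{c}=-\frac{\tau}{1-\tau}.
\]
Substitute $s=1-\tau\in(0,1)$ and $w=-1/s$, so that $w<-1$. The critical-point equation becomes $-\log(-w)-\log c=1+w$, which rearranges to
\[
we^{w}=-\frac{1}{ec}=-\frac{\delta|\mathcal K|}{eK}\in[-1/e,0).
\]
Because $w<-1$, the relevant solution is the lower branch: $w=W(-\delta|\mathcal K|/(eK))$. This gives $\tau=1+1/w$, which is \eqref{eq:optimal_calibrator}. The critical point is unique because the lower branch is single-valued on $(-1/e,0)$.

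Plugging back in, $\log((1-\tau)/c)=1-1/s=1+w$ and $\tau=(w+1)/w$, so
\[
g(\tau)=w=W\!\left(-\frac{\delta|\mathcal K|}{eK}\right).
\]
Substituting this into the bound from Step 1 yields \eqref{eq:vs_condition}.

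\textbf{Step 3: sufficiency for fixed $|\mathcal K|$.} When $|\mathcal K|$ does not depend on the data, $\tau$ can be fixed in advance at the value \eqref{eq:optimal_calibrator}. This keeps $f_\tau$ a legitimate, data-independent e-calibrator, so Theorem~\ref{theorem:validity_full} applies. For this $\tau$, the right-hand side of \eqref{eq:compare_width_general_vs_split_rho} equals that of \eqref{eq:vs_condition}. The "if" direction of Lemma~\ref{lemma:compare_ps_dme_split} then gives the strictly smaller width.

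\textbf{Main obstacle.} I expect the delicate part to be Step 2. It requires checking that the critical point is the unique global maximizer on $(0,1)$, which follows from the boundary limits together with uniqueness of the root. It also requires correctly identifying the lower Lambert branch through the constraint $w<-1$, and handling the boundary case $|\mathcal K|=K,\ \delta\to1$, where the argument of $W$ approaches $-1/e$.
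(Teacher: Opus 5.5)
Your proof is correct, but it computes the key quantity $\sup_{\tau}f_\tau^{-1}(c)$ by a different route than the paper.

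The paper goes through the pointwise Vovk--Sellke envelope $f_{\mathrm{VS}}(p)=\max_\tau f_\tau(p)=-e^{-1}/(p\log p)$, quoted from Vovk and Wang. It uses $f_\tau\le f_{\mathrm{VS}}$ to get $f_\tau^{-1}(y)\le f_{\mathrm{VS}}^{-1}(y)$. It then inverts the envelope by solving $p\log p=-1/(ey)$, which gives $\exp\bigl(W(-1/(ey))\bigr)$. Finally, it reads off the optimal $\tau$ from the pointwise maximizer evaluated at $p^*=f_{\mathrm{VS}}^{-1}(y)$.

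You instead optimize $\log f_\tau^{-1}(c)=\tau^{-1}\log((1-\tau)/c)$ directly by calculus. The two computations are dual (sup of inverses versus inverse of the sup). Yours is self-contained, and it shows explicitly that the supremum is attained at a unique $\tau^*\in(0,1)$ with $g(\tau^*)=W$. That is exactly what the sufficiency claim needs; in the paper, the step $f_{\tau^*}(p^*)=f_{\mathrm{VS}}(p^*)=y\Rightarrow f_{\tau^*}^{-1}(y)=p^*$ is left implicit.

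Your direct calculation also avoids a slip in the paper's proof. The pointwise maximizer of $(1-\tau)p^{-\tau}$ is $\tau=1+1/\log p$. The value $-1/\log p$ printed in \eqref{eq:tau_argmax} is instead the maximizing value of $1-\tau$, i.e., of $\kappa$ in the parametrization $\kappa p^{\kappa-1}$. With $\log p^*=W$, the corrected formula gives \eqref{eq:optimal_calibrator}, in agreement with your result.

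Two minor points. First, $\delta<1$ forces $c>1$, so the argument of $W$ lies strictly inside $(-1/e,0)$. Hence the boundary case you flag cannot occur, and $\tau^*\in(0,1)$ automatically. Second, uniqueness of the critical point is most cleanly justified by two facts: $w\mapsto we^{w}$ is strictly monotone on $(-\infty,-1)$, and $\tau\mapsto -1/(1-\tau)$ is a bijection from $(0,1)$ onto $(-\infty,-1)$.
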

An illustration of this result can be found in Appendix \ref{app:illustrations}.

\section{Related Work}
\label{sec:related_work}

Referring to Appendix \ref{app:related_work} for further details, we highlight here connections to the literature on post-selection inference.

In the literature on inference after data-dependent selection \citep{lee2016exact, taylor2018post}, the FCR criterion was introduced in \citep{benjamini2005false} as an analog of the false discovery rate for confidence intervals. More recently, reference \citep{xu2024post} developed general procedures for controlling FCR under arbitrary data-dependent selection. This methodology leverages e-values for their in-sample post-selection validity \citep{grunwald2024beyond, vovk2021values}. Related work has clarified the connection between e-values, post-hoc p-values, and data-dependent testing procedures \citep{koning2025posthocalphahypothesistesting, chugg2026post}.

In concurrent work, leveraging the framework of e-values, reference \citep{koobs2026equivalencetestingdatadependentposthoc} developed a mechanism for obtaining confidence intervals based on data-dependent confidence levels. Framing the contribution in the context of the present work, the method in \citep{koobs2026equivalencetestingdatadependentposthoc} applies to a single model and to a fixed statistic, e.g., the average risk. Therefore, it cannot be leveraged for model selection, as well as for exploring the test-time trade-offs between performance and reliability.

\section{Experiments}
\label{sec:experiments}

\subsection{Benchmarks and Performance Measures}

In this section, we compare three distributional model evaluation strategies by leveraging different metrics capturing reliability and efficiency for both performance evaluation and final selection.

\subsubsection{Distributional Model Evaluation Strategies}
\label{sec:benchmarks}

We compare the following strategies:

\textbf{1) Sample-splitting distributional model evaluation (SS-DME):}
As described in
Sec.~\ref{sec:sample_split}, SS-DME partitions the dataset for each configuration into
independent selection and inference subsets.
Model selection is carried out using the selection split,
and CDF bands are constructed on the inference split via the
DKW-based confidence intervals (\ref{eq:split_lower}) and (\ref{eq:split_higher}).

\textbf{2) Naive in-sample distributional model evaluation (N-PS-DME):}
This benchmark reuses the full dataset $\mathcal{D}$
for both model selection and distributional inference,
applying the same construction (\ref{eq:split_lower})-(\ref{eq:split_higher}) as in SS-DME with $n_k$ in place of
$n_k^{\mathrm{eval}}$. Due to selection bias, N-PS-DME cannot generally guarantee the FCR requirement (\ref{eq:fcr_guarantee}).

\textbf{3) In-sample distributional model evaluation (PS-DME):} The proposed PS-DME, detailed in
Sec.~\ref{sec:in-sample}, reuses the full dataset for both selection and inference, leveraging an e-value-based methodology to construct valid post-hoc CDF bounds (\ref{eq:e_lower}) and (\ref{eq:e-upper}) that satisfy the FCR condition (\ref{eq:fcr_guarantee}). Unless otherwise stated, we use the power e-calibrator (\ref{eq:power_e-calibrator}) with the optimal parameter $\tau$ calculated as discussed in Sec. \ref{sec:compare_methods}.

\subsubsection{Performance Measures}

We adopt the following metrics to assess both reliability and efficiency of different strategies.

\textbf{1) False coverage proportion (FCP):}
For each simulated trial,
we compute the realized FCP as defined in
\eqref{eq:def:fcp},
namely the fraction of selected configurations $k\in\mathcal{K}$
for which the miscoverage event
$\mathcal O_k$ in \eqref{eq:not_covered} occurs.
Averaging across repetitions yields an empirical estimate
of the FCR (\ref{eq:def:fcr}).

\textbf{2) Confidence band width:}
We quantify statistical efficiency via the average
uniform band radius.
For SS-DME and N-PS-DME this corresponds to
$\varepsilon_k$ in
\eqref{eq:width_split_new},
and for PS-DME to
$\varepsilon_k$ in
\eqref{eq:width_general_new}.
We report averages across selected configurations
and experimental repetitions.

\textbf{3) Post-selection guaranteed KPI:}
With reference to the example in Fig. \ref{fig:motivating_example}(a), suppose that the system under study has a maximum allocated failure probability equal to $\gamma$. Using the lower bound $L_k(x)$, each configuration $k$ can guarantee a KPI no worse than
\begin{equation}
\label{eq:best_guaranteed_k}
X_k^* = \inf \{x: L_k(x)\ge 1-\gamma\},
\end{equation}
while satisfying an error rate (FCR) no larger than $\delta$. We provide an illustration of this definition in Fig. \ref{fig:best_quantile_bounds_two_n}, which is further discussed in the next section. Upon optimization over the configuration $k\in \mathcal{K}$, the \textit{post-selection best guaranteed KPI} is thus given by $X^* = \min_{k\in \mathcal{K}} X^*_k$.

\subsection{Synthetic Data}
\label{sec:synthetic}

We begin by validating the proposed PS-DME in a fully
controlled synthetic setting, where the true data-generating distribution is
known exactly. This allows us to directly assess both coverage properties and
the tightness of the resulting CDF bands.

\textbf{Setup:} We consider a standard linear-Gaussian model with
covariates drawn as
$U \sim \mathcal{N}(0, I_{10})$,
and responses generated according to the linear model
$V = U^\top \beta + \varepsilon$, with noise $\varepsilon \sim \mathcal{N}(0, 1)$
and coefficient vector $\beta \sim \mathcal{N}(0,I_{10})$
whose entries are held
fixed for all data points.

We construct $K = 2000$ candidate predictors $\widehat{g}_k(\cdot)$ by training ridge regression models over
a logarithmically spaced grid of regularization parameters
$\lambda_1,\dots,\lambda_K$, where the grid spans the interval
$[10^{-4}, 10^{2}]$.
Each model is trained on an independent training dataset of size 600.
For each model $k$, the KPI is defined as the squared prediction error
$X_{k,i} = (V_i - \widehat{g}_k(U_i))^2$,
evaluated on the same independent calibration dataset $\{(U_i, V_i)\}_{i = 1}^n$ of size $n$.
Thus, for each model $k$, we have the KPI samples $\mathcal{D}_k = \{X_{k,i}\}_{i=1}^{n}$ that are i.i.d.\ draws from
an unknown distribution $P_k$, and samples $\{X_{k,i}\}_{k = 1}^K$ are correlated across different hyperparameters. Model pre-selection chooses the $M = 1000$ configurations with the smallest
empirical mean KPI. Throughout all experiments, we target an FCR level
$\delta = 0.1$ in the requirement (\ref{eq:fcr_guarantee}).

\textbf{Results:} To start, we illustrate the per-configuration CDF estimates produced by different schemes for
a given candidate hyperparameter, selected here as $\lambda_k = 0.0231$. Fig. \ref{fig:cdf_band_zoom} plots the corresponding true risk CDF $F_k(x)$, the empirical CDF $\widehat{F}_k(x)$, and the corresponding post-selection confidence band $[L_k(x), U_k(x)]$ provided by SS-DME, N-PS-DME, and PS-DME using a calibration dataset of size $n = 20$. For SS-DME, we use 50\% of the available calibration dataset for pre-selection, and the remainder 50\% for inference. In this example, while SS-DME and PS-DME provide miscoverage control, N-PS-DME exhibits a miscoverage event. This illustrates the general theoretical result that SS-DME and PS-DME offer the FCR requirement (\ref{eq:fcr_guarantee}), while N-PS-DME does not provide error control guarantees due to selection bias.

Fig. \ref{fig:cdf_band_zoom} also confirms the comparison between the widths of SS-DME and PS-DME formalized by Lemma \ref{lemma:compare_ps_dme_split}. For the given split, the condition (\ref{eq:compare_width_general_vs_split_rho}) is satisfied, and accordingly, the band width achieved by PS-DME is markedly narrower than that of SS-DME. As detailed in Sec. \ref{sec:in-sample}, this can be attributed to the fact that PS-DME uses the entire calibration dataset for forming the bands, while SS-DME is forced to use only a subset of the dataset for inference. As highlighted in the figure, this improved efficiency allows PS-DME to select a KPI $X^*_k$ in (\ref{eq:best_guaranteed_k}) that is smaller, and thus preferable, than SS-DME.

\begin{figure}
    \centering
    \includegraphics[width=\linewidth]{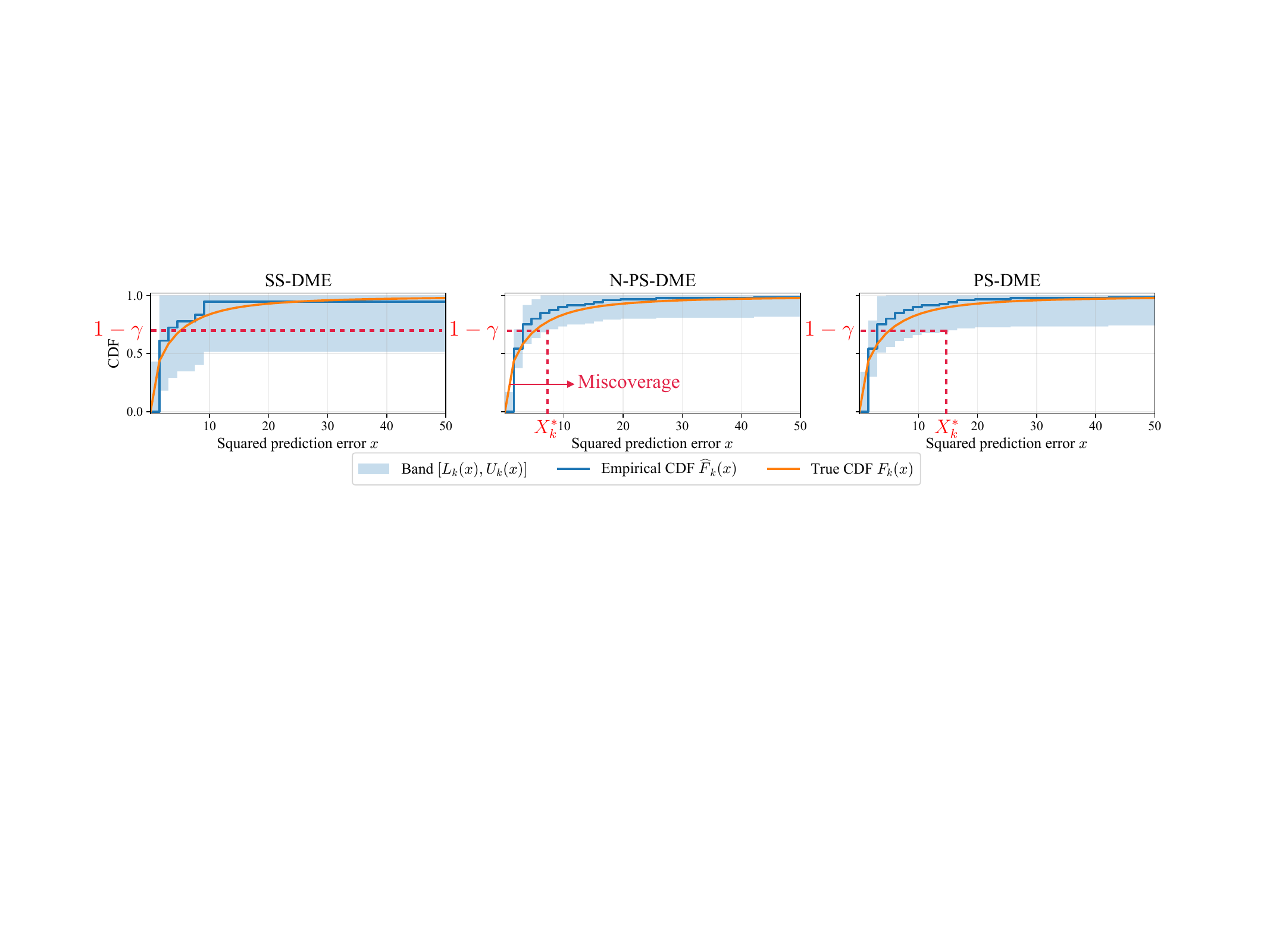}
\caption{
For a candidate hyperparameter $\lambda_k = 0.0231$, we show the true CDF $F_k(x)$, the empirical CDF $\widehat{F}_k(x)$, and the post-selection confidence band $[L_k(x), U_k(x)]$ for the data splitting, naive in-sample, and post-hoc in-sample benchmarks.
}
    \label{fig:cdf_band_zoom}
\end{figure}

\begin{figure}
    \centering
    \begin{subfigure}[t]{0.48\linewidth}
        \centering
        \includegraphics[width=\linewidth]{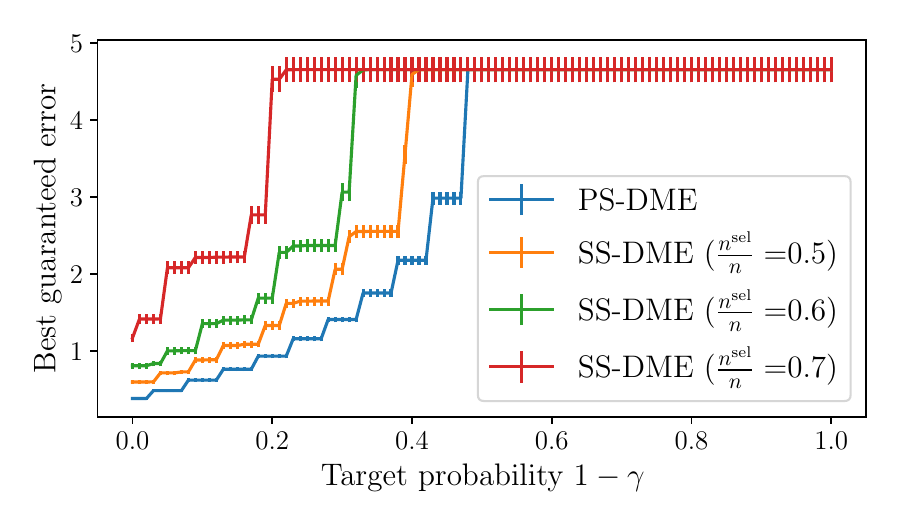}
        \caption{Calibration size $n=20$.}
    \end{subfigure}
    \hfill
    \begin{subfigure}[t]{0.48\linewidth}
        \centering
        \includegraphics[width=\linewidth]{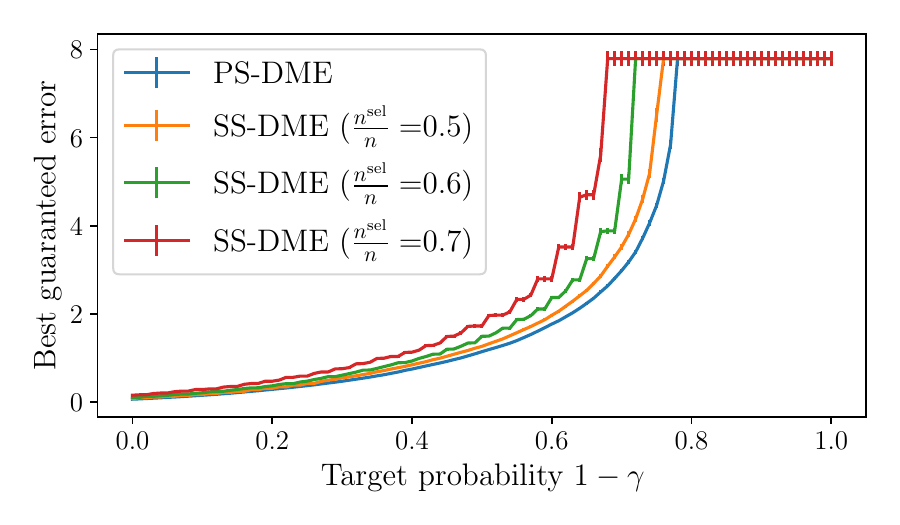}
        \caption{Calibration size $n=100$.}
    \end{subfigure}
    \caption{Best guaranteed KPI for the synthetic data experiment as a function of the target probability $1-\gamma$ for SS-DME and PS-DME. For SS-DME, a fraction $n^\text{sel}/n\in\{0.5, 0.6, 0.7\}$ of the available data is used for model pre-selection, while the remaining samples are used to construct the CDF confidence bands. (a) Results with 20 calibration samples. (b) Results with 100 calibration samples. The vertical error bars show the standard error of the mean across 100 random splits of the dataset.}
    \label{fig:best_quantile_bounds_two_n}
\end{figure}

We now turn to further analyzing the post-selection guaranteed KPI. Fig.~\ref{fig:best_quantile_bounds_two_n} reports the best post-selection guaranteed KPI as a function of the target probability $\gamma$ for SS-DME and for PS-DME. N-PS-DME does not guarantee FCR control, and thus is not included in this analysis. PS-DME is seen to achieve uniformly lower KPI (quadratic error) across
all values of target probability $1-\gamma$. Highest benefits are observed in the regimes of low test-time failure rates $\gamma$ and when calibration data are scarce, i.e., for $n = 20$.
In fact, in a low-sample regime, SS-DME suffers from a severe reduction
in effective inference sample size, since only a fraction of the already
limited data is available for band construction. Empirical FCR results are reported in Appendix~\ref{app:fcr_table}.

\subsection{Evaluating and Selecting LLM Decoding Configurations}
\label{sec:exp_spider_llm}

We next instantiate the proposed PS-DME in a
text-to-SQL task performed by an LLM, where the KPI is a score accounting for the quality of the generated SQL query.

\textbf{Task and dataset:} We consider the text-to-SQL Spider benchmark \citep{yu2018spider}, in which the LLM receives as input (\textit{i}) a natural-language question, (\textit{ii}) a database identifier specifying which database the question refers to, and (\textit{iii}) the corresponding database schema, including table names, columns, and their relationships. The goal of the model is to generate an SQL query that, when executed on the specified database, correctly answers the natural-language question. Performance is evaluated by comparing the execution result of the generated query with that of the gold (ground-truth) SQL query provided in the dataset. In our experiments, we restrict attention to a fixed subset of $n=200$ (randomly selected) validation examples from Spider \citep{yu2018spider}.

We adopt the
\texttt{Qwen/Qwen2.5-7B-Instruct} LLM \citep{yang2024qwen2technicalreport} with nucleus (top-$p$) sampling
\citep{holtzman2019curious}, with parameters given by
$\texttt{top\_p} \in \{0.90, 0.95\}$ and
temperature $\tau \in \{0.3, 0.7, 1.0\}$. The maximum output length is $160$ tokens.

The hyperparameters correspond to decoding strategies that trade off
computational budget against solution quality.
In total, we consider $K=43$ candidate configurations,
organized into three families:

\textbf{(i) Single decode:}
We include one greedy (deterministic) decoding together with the
six stochastic single-pass decoding strategies obtained by varying $\texttt{top\_p} \in \{0.90, 0.95\}$ and temperature $\tau \in \{0.3, 0.7, 1.0\}$.

\textbf{(ii) Best-of-$n$ (BoN) decoding:}
Best-of-$n$ sampling \citep{brown2020language,wang2022self}
generates $n\in \{2,4,6\}$ independent SQL candidates for the same input,
and selects the one achieving the highest reward.
For each value of parameter $n$, we vary
temperature $\tau \in \{0.3, 0.7, 1.0\}$
and $\texttt{top\_p} \in \{0.90, 0.95\}$,
yielding 18 BoN configurations.

\textbf{(iii) Iterative draft–refine (IAD) decoding:}
Iterative self-refinement methods \citep{madaan2023self}
improve outputs by repeatedly conditioning on previous drafts.
Starting from an initial draft SQL generated from the prompt,
the model produces revised SQL queries conditioned on the previous draft.
This process is repeated for up to $t\in \{2,4,6\}$ iterations,
after which the final SQL query is returned.
For each value of $t$, we vary
temperature $\tau \in \{0.3, 0.7, 1.0\}$
and $\texttt{top\_p} \in \{0.90, 0.95\}$,
yielding 18 IAD configurations.

To evaluate the quality of a generated SQL query, we construct a performance metric that captures both structural similarity at the token level and semantic correctness relative to the reference query. The token-level KPI measures the overlap between the SQL tokens appearing in the predicted and reference queries using the F1 score, i.e., the harmonic mean of precision and recall. The semantic correctness measure determines whether the generated query retrieves the correct information from the database leveraging the F1 score \citep{yu2018spider}. The final KPI is given by $1 - wF1_\text{token}
-
(1 - w)F1_\text{exec}$, where $F1_\text{token}$ and $F1_\text{exec}$ are the discussed F1 scores. We set $w = 0.3$, assigning greater importance to execution-level correctness, since the primary objective of a text-to-SQL system is to retrieve the correct database result. Model pre-selection chooses the $M=30$ configurations with the smallest empirical mean KPI.

\textbf{Results:} Fig.~\ref{fig:spider_quantile_compare} compares the post-selection guaranteed KPI achieved by SS-DME, which uses a fraction $n^\text{sel}/n\in\{0.6, 0.7, 0.8\}$ of the dataset for pre-selection, with that obtained by the proposed PS-DME. As shown in the figure, PS-DME consistently guarantees lower KPI values, corresponding to higher performance for the selected decoding strategy.

\begin{figure}[t]
    \centering
    \includegraphics[width = 0.65\linewidth]{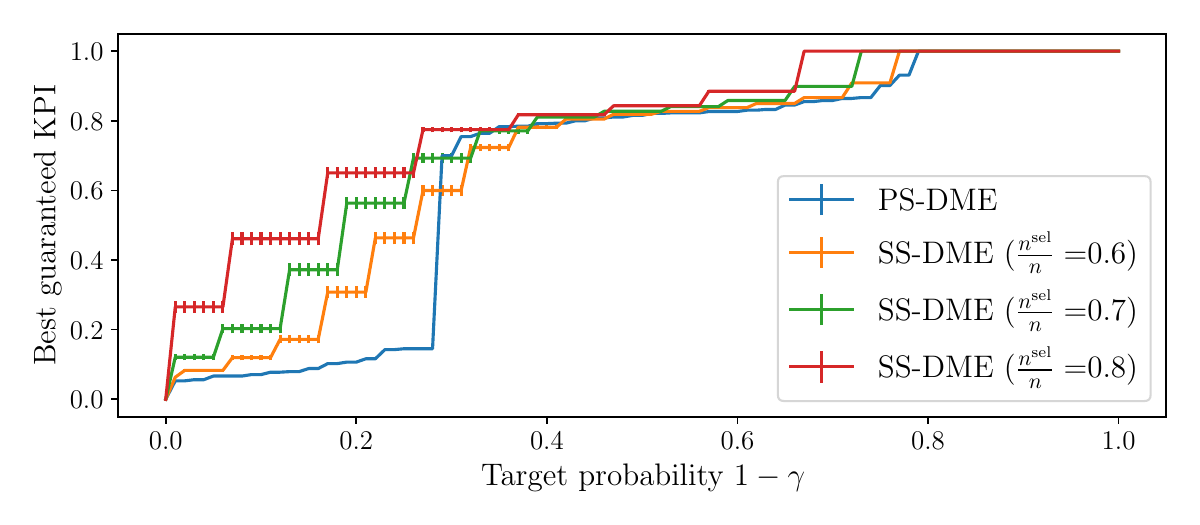}
    \caption{Best guaranteed KPI (loss) for the Spider text-to-SQL experiment as a function of the target probability $1-\gamma$ for SS-DME and PS-DME. For SS-DME, a fraction $n^\text{sel}/n\in\{0.6, 0.7, 0.8\}$ of the available data is used for decoding strategy pre-selection, while the remaining samples are used to construct the CDF confidence bands. Results are shown for $n=50$ calibration samples. The vertical error bars show the standard error of the mean across 100 random splits of the dataset.}
    \label{fig:spider_quantile_compare}
\end{figure}

\section{Conclusion}
\label{sec:conclusion}

We introduced PS-DME, a framework for statistically valid distributional evaluation of pre-selected models under full data reuse. By leveraging e-values, the proposed method constructs post-selection confidence bands for entire CDFs with guaranteed false coverage rate control, addressing the bias induced by data-dependent model selection.

Unlike conventional target-based approaches, PS-DME enables uncertainty-aware assessment of the full KPI distribution, supporting reliable exploration of performance--reliability trade-offs across operating points such as latency quantiles and outage levels. Compared to sample splitting, PS-DME can yield narrower confidence bands when the efficiency gains from full data reuse outweigh the cost of post-selection correction, as formalized through explicit width comparisons.

Experiments on synthetic data, LLM decoding, and telecom evaluation demonstrate that PS-DME achieves valid post-selection inference while often providing tighter bands and stronger guarantees than sample-splitting baselines.

Future work includes extending the framework to more general and multivariate performance measures, developing sharper confidence band constructions, and integrating distributional guarantees into downstream decision-making for reliable model deployment.

\section*{Acknowledgments}
This work was supported by the European Research Council (ERC) under the European Union’s Horizon Europe Programme (grant agreement No. 101198347). The work of O. Simeone was also supported by an EPSRC Open Fellowship (EP/W024101/1) and by the EPSRC project (EP/X011852/1).


\newpage

\appendix

\section{Additional Illustrations}
\label{app:illustrations}

In this section, we provide additional illustrations for concepts introduced in the main text.

Fig. \ref{fig:miscoverage_example} illustrates the definition of a miscoverage event $\mathcal{O}_k$, as defined in \eqref{eq:not_covered}.

\begin{figure}[h!]
    \centering
    \begin{subfigure}[t]{0.48\linewidth}
        \centering
        \includegraphics[width=\linewidth]{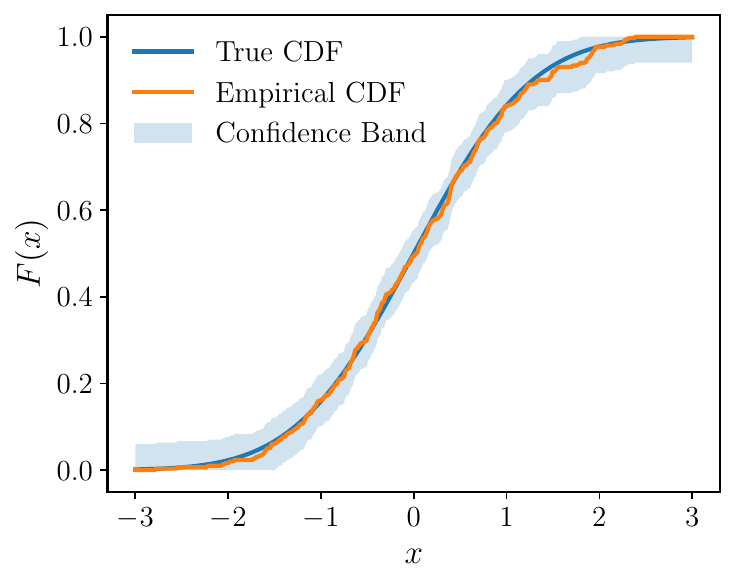}
        \caption{}
    \end{subfigure}
    \hfill
    \begin{subfigure}[t]{0.48\linewidth}
        \centering
        \includegraphics[width=\linewidth]{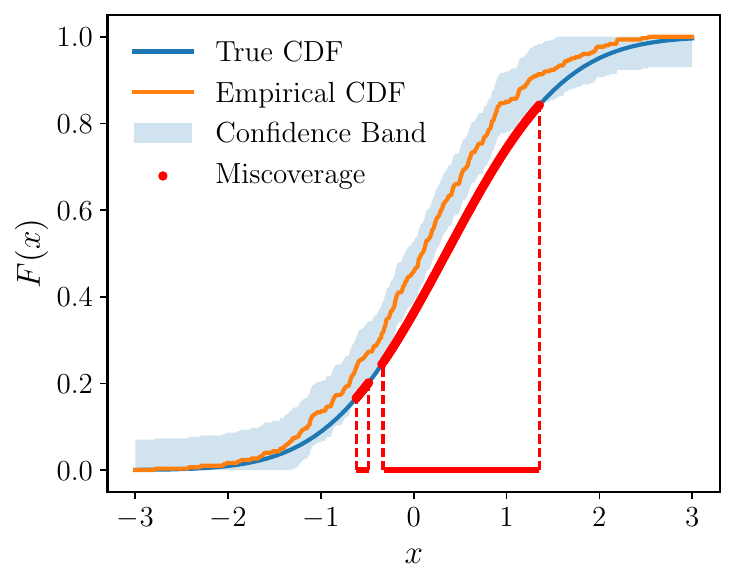}
        \caption{}
    \end{subfigure}
\caption{
Illustration of the miscoverage event $\mathcal{O}_k$ defined in \eqref{eq:not_covered}.
(a) A scenario in which the miscoverage event $\mathcal{O}_k$ in (\ref{eq:not_covered}) does not occur, as the true CDF lies entirely within the reported confidence band.
(b) A scenario in which the miscoverage event $\mathcal{O}_k$ in (\ref{eq:not_covered}) occurs, since the true CDF exits the band over the region of input $x$ marked in red.
}
    \label{fig:miscoverage_example}
\end{figure}

Fig.~\ref{fig:width_analysis} illustrates the band width (\ref{eq:width_general_new}) achieved by PS-DME when using the e-calibrators in (\ref{eq:power_e-calibrator}) with selected values of parameters $\tau$, as well as with the optimal calibrator parameter $\tau$ in \eqref{eq:optimal_calibrator}, together with the band width (\ref{eq:width_split_new}) obtained by SS-DME, as a function of the dataset split ratio $n_k^\text{eval}/n_k$ for fixed values $\delta = 0.1$ and $\delta|\mathcal{K}|/K = 0.01$. The intersection of the curves corresponding to PS-DME and SS-DME occurs at the threshold (\ref{eq:compare_width_general_vs_split_rho}) identified in Lemma \ref{lemma:compare_ps_dme_split} beyond which SS-DME yields a smaller width. It is observed that the choice of $\tau$ in (\ref{eq:optimal_calibrator}) yields the larger range of values of the ratio $n_k^\mathrm{eval}/n_k$ over which PS-DME is advantageous over SS-DME.

\begin{figure}[h!]
    \centering
    \includegraphics[width=0.7\linewidth]{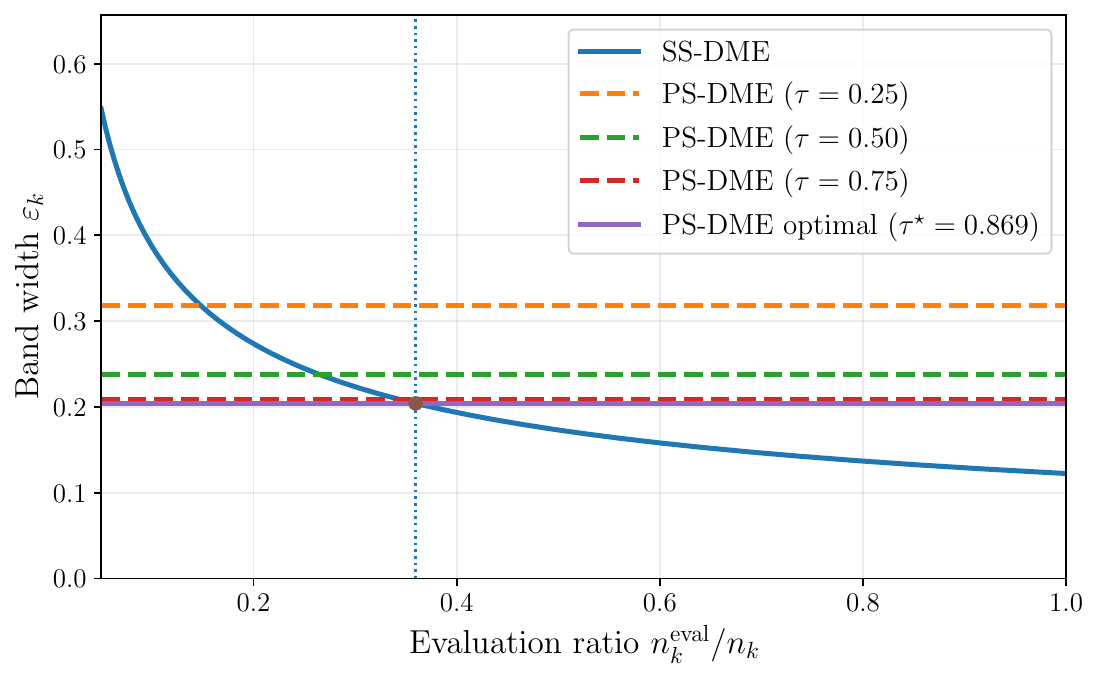}
    \caption{Comparison of the band widths produced by SS-DME and by PS-DME with the e-calibrator family (\ref{eq:power_e-calibrator}). The parameter $\delta |\mathcal{K}|/K$ is fixed to 0.01.}
    \label{fig:width_analysis}
\end{figure}

\section{Additional Related Work}
\label{app:related_work}

\paragraph{Reliable hyperparameter selection:}
Recent work has shown that hyperparameter selection can be cast as an MHT problem, enabling finite-sample guarantees for the selected model \citep{angelopoulos2025learn}. This framework has been extended to a variety of performance criteria and engineering applications \citep{stephen2021gentle, bates2023testing, angelopoulos2022conformal}; see \citep{farzaneh2025ensuring} for a recent review. A key limitation of this literature is that it focuses on scalar performance metrics, e.g., average risk, relative to a pre-specified target level. In contrast, this work aims at \emph{distributional} evaluation, enabling comparison of models across all test-time reliability levels without committing to a single KPI threshold.

\paragraph{Confidence bands for distribution functions:}
Classical nonparametric methods provide confidence bands for a single distribution function, notably via the DKW inequality \citep{dvoretzky1956asymptotic, massart1990tight}. Likelihood-ratio and goodness-of-fit approaches, such as Berk--Jones and empirical likelihood methods (see Appendix \ref{app:generalization}), can yield tighter or more adaptive bands \citep{berk1979goodness, owen1995nonparametric, dumbgen2023new}. However, these methods are designed for marginal inference and do not account for model selection. In contrast, our setting requires validity after arbitrary data-dependent screening.

\section{Proof of Lemma \ref{theorem:validity_split}}
\label{app:proof_lemma_1}

For each fixed $k$,
the DKW inequality (\ref{eq:dkw_split}) guarantees the inequality
\begin{equation}
\mathbb P\!\left(
F_k \notin \mathcal C_k
\right)
\le
\delta.
\end{equation}
Therefore, we have
\begin{subequations}\label{eq:fcr}
\begin{align}
\mathrm{FCR}
&=
\E_\mathcal{K}\left[\E\!\left[
\frac{1}{\max\{|\mathcal K|,1\}}
\sum_{k \in \mathcal K}
\mathbf{1}\{F_k \notin \mathcal C_k\}
\middle | \mathcal{K}\right]\right] \label{eq:fcr:a} \\
& = \E_\mathcal{K}\left[\frac{1}{\max\{|\mathcal K|,1\}}\E\!\left[
\sum_{k \in \mathcal K}
\mathbf{1}\{F_k \notin \mathcal C_k\}
\middle | \mathcal{K}\right]\right] \label{eq:fcr:b}\\
&\le
\E_\mathcal{K}\!\left[
\frac{1}{\max\{|\mathcal K|,1\}}
\sum_{k \in \mathcal K}
\delta
\right] \label{eq:fcr:c} \\
&\leq
\delta \label{eq:fcr:d}
\end{align}
\end{subequations}
where the outer expectation $\E _\mathcal{K}[\cdot]$ is with respect to the pre-selected subset $\mathcal{K}$,
which establishes the guarantee.

\section{Proof of Theorem \ref{theorem:validity_full}}
\label{app:theorem_proof}

For any candidate CDF $F$,
consider the null hypothesis
\begin{equation}
\label{eq:insample_null}
\mathcal H_{0,k}(F):
\quad
F_k = F
\end{equation}
that the candidate distribution $F$
coincides with the true distribution $F_k$. Define the Kolmogorov distance \citep{wasserman2006all}
\begin{equation}
\label{eq:T_insample}
T_k(F)
=
\sup_{x \in \mathbb R}
\left|
\widehat F_k(x) - F(x)
\right|.
\end{equation}

By the DKW inequality \citep{dvoretzky1956asymptotic},
under the null hypothesis \eqref{eq:insample_null}, the inequality
\begin{equation}
\mathbb P\!\left(
T_k(F)
>
\varepsilon \mid \mathcal{H}_{0,k}(F)
\right)
\le
2 \exp\!\left(-2 n_k \varepsilon^2 \right)
\end{equation}
holds. As shown next, this implies that the following statistic
\begin{equation}
\label{eq:pval_insample}
p_k(F)
=
\min\Bigl\{
1,\,
2 \exp\!\bigl(-2 n_k (T_k(F))^2\bigr)
\Bigr\}
\end{equation}
is a valid p-value for the null hypothesis (\ref{eq:insample_null}).

\begin{lemma}
\label{lem:pval_valid_insample}
For the null hypothesis $\mathcal H_{0,k}(F)$ in \eqref{eq:insample_null},
the statistic $p_k(F)$ in \eqref{eq:pval_insample}
is a valid p-value, i.e.,
for all $\alpha \in (0,1)$,
\begin{equation}
\mathbb P\!\left(
p_k(F) \le \alpha
\,\middle|\,
\mathcal H_{0,k}(F)
\right)
\le
\alpha.
\end{equation}
\end{lemma}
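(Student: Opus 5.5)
The plan is to reduce the event $\{p_k(F)\le\alpha\}$ to a tail event for the Kolmogorov statistic $T_k(F)$ and then apply the DKW inequality (\ref{eq:dkw_split}), with $n_k$ in place of $n_k^{\mathrm{eval}}$, under $\mathcal H_{0,k}(F)$.

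\textbf{Step 1: rewrite the event.} Fix $\alpha\in(0,1)$. Since $\alpha<1$, the event $\{p_k(F)\le\alpha\}$ can only occur when the minimum in \eqref{eq:pval_insample} is attained by the exponential term. It therefore coincides with
\begin{equation*}
\left\{2\exp\!\bigl(-2n_k T_k(F)^2\bigr)\le\alpha\right\}.
\end{equation*}
The map $t\mapsto 2\exp(-2n_k t^2)$ is strictly decreasing on $[0,\infty)$ and $T_k(F)\ge 0$. The event is thus equivalent to $\{T_k(F)\ge\varepsilon_\alpha\}$, where
\begin{equation*}
\varepsilon_\alpha=\sqrt{\tfrac{1}{2n_k}\log(2/\alpha)}>0
\end{equation*}
is the value at which $2\exp(-2n_k\varepsilon_\alpha^2)=\alpha$.

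\textbf{Step 2: handle the non-strict inequality.} The DKW bound is stated for the strict event $\{T_k(F)>\varepsilon\}$, whereas Step 1 produces $\{T_k(F)\ge\varepsilon_\alpha\}$. This boundary mismatch is the only delicate point. I would resolve it by continuity from below. For any $\varepsilon<\varepsilon_\alpha$ we have $\{T_k(F)\ge\varepsilon_\alpha\}\subseteq\{T_k(F)>\varepsilon\}$. Under $\mathcal H_{0,k}(F)$, DKW then gives
\begin{equation*}
\Pp\bigl(T_k(F)\ge\varepsilon_\alpha\mid\mathcal H_{0,k}(F)\bigr)\le 2\exp(-2n_k\varepsilon^2).
\end{equation*}
Letting $\varepsilon\uparrow\varepsilon_\alpha$ makes the right-hand side tend to $2\exp(-2n_k\varepsilon_\alpha^2)=\alpha$. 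Alternatively, one can invoke Massart's form of DKW, which also holds with $\ge$.

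\textbf{Step 3: conclude.} Combining Steps 1 and 2 yields $\Pp(p_k(F)\le\alpha\mid\mathcal H_{0,k}(F))\le\alpha$ for every $\alpha\in(0,1)$, so $p_k(F)$ is a valid p-value.

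Two remarks complete the argument. First, no independence from the selection rule or from the other datasets is needed, since the claim concerns a single fixed $k$ and a fixed candidate $F$. Second, DKW applies for arbitrary (possibly discontinuous) $F_k$, so no continuity assumption on $F_k$ is required.
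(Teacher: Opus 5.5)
Your proof is correct and follows the paper's route: you rewrite $\{p_k(F)\le\alpha\}$ as $\{T_k(F)\ge\sqrt{\log(2/\alpha)/(2n_k)}\}$ and apply the DKW inequality under $\mathcal H_{0,k}(F)$. Your Step 2 adds rigor the paper lacks, since the paper applies DKW, which is stated for the strict event $\{T_k(F)>\varepsilon\}$, directly to the non-strict event without comment, and your limiting argument $\varepsilon\uparrow\varepsilon_\alpha$ closes that small gap.
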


\begin{proof}
Fix $\alpha \in (0,1)$.
The event $\{p_k(F) \le \alpha\}$
is equivalent to
the inequality
\begin{equation}
T_k(F)
\ge
\sqrt{
\frac{1}{2 n_k}
\log\!\left(\frac{2}{\alpha}\right)
}.
\end{equation}
Under hypothesis $\mathcal H_{0,k}(F)$,
the DKW inequality \citep{dvoretzky1956asymptotic} implies
\begin{equation}
\mathbb P\!\left(
T_k(F)
\ge
\sqrt{
\frac{1}{2 n_k}
\log\!\left(\frac{2}{\alpha}\right)
}
\middle| \mathcal{H}_{0,k}(F)\right)
\le
2 \exp\!\left(
-2 n_k \cdot
\frac{1}{2 n_k}
\log\!\frac{2}{\alpha}
\right)
=
\alpha,
\end{equation}
which establishes super-uniformity and concludes the proof.
\end{proof}

Given the p-value $p_k(F)$ in \eqref{eq:pval_insample},
we construct the statistic
\begin{equation}
E_k(F)
=
f\!\left(p_k(F)\right),
\label{eq:e_val_general_new}
\end{equation}
which is a valid e-value, i.e.,
\begin{equation}
\E\!\left[E_k(F)\mid \mathcal{H}_{0,k}(F)\right] \le 1.
\end{equation}

The confidence bound $C_k = \{F:L_k(x)\le F(x)\le U_k(x)\;\;\text{for all}\;\;x\in \R\}$ with (\ref{eq:e_lower}) and (\ref{eq:e-upper}) produced by PS-DME is obtained by inverting the binary test of the null hypothesis $\mathcal{H}_{0,k}(F)$ using the e-value $E_k(F)$ as
\begin{equation}
\mathcal C_k
=
\left\{
F:\ 
E_k(F)
<
\frac{K}{\delta |\mathcal{K}|}
\right\} = \left\{ F:  p_k(F)
>
f^{-1}\!\left(\frac{K}{\delta |\mathcal{K}|}\right) \right\}.
\label{eq:eby_threshold_general_new}
\end{equation}

Let $\mathcal O_k$ denote the miscoverage event (\ref{eq:not_covered}).
By construction of set (\ref{eq:eby_threshold_general_new}), we have
\begin{equation}
\ind\{\mathcal O_k\}
=
\ind\!\left\{
E_k(F_k)
\ge
\frac{K}{\delta |\mathcal{K}|}
\right\}
\le
\frac{\delta |\mathcal{K}|}{K} E_k(F_k),
\end{equation}
where $F_k$ represents the true CDF, and the inequality follows from the condition $\ind \{x\ge a \}\le x/a$, which holds for all $x\ge 0$ and $a>0$.
Therefore, taking the expectation with respect to the true joint distribution of the KPIs, the FCR in (\ref{eq:def:fcr}) is evaluated as 
\begin{subequations}\label{eq:fcr2}
\begin{align}
\mathrm{FCR}
&=
\E\!\left[
\frac{1}{\max\{|\mathcal{K}|,1\}}
\sum_{k\in \mathcal{K}}
\ind\{\mathcal O_k\}
\right] \label{eq:fcr2:a} \\
&\le
\E\!\left[
\frac{1}{\max\{|\mathcal{K}|,1\}}
\sum_{k\in \mathcal{K}}
\frac{\delta |\mathcal{K}|}{K}E_k(F_k)
\right] \label{eq:fcr2:b} \\
&\le
\frac{\delta}{K}
\sum_{k=1}^K
\E[E_k(F_k)] \label{eq:fcr2:c}\\
& = \frac{\delta}{K}\sum_{k = 1}^K\mathbb{E}\left[E_k(F_k)\middle| \mathcal{H}_{0,k}(F_k)\right] \le \delta \label{eq:fcr2:d}
\end{align}
\end{subequations}

\section{Proof of Corollary \ref{cor:vs_condition}}
\label{app:cor_proof}

The Vovk--Sellke bound \citep{vovk2021values} states that for the power
family of calibrators (\ref{eq:power_e-calibrator}) we have
\begin{equation}
\label{eq:fvs}
f_{\mathrm{VS}}(p)
=
\max_{\tau\in(0,1)} f_\tau(p)
=
\begin{cases}
\dfrac{-\exp(-1)}{p \log p}, & p \le e^{-1}, \\[6pt]
1, & p > e^{-1},
\end{cases}
\end{equation}
where
\begin{equation}
\label{eq:tau_argmax}
    \arg\max_{\tau\in(0,1)} f_\tau(p) = -\frac{1}{\log p}
\end{equation}
Hence, for all $p\in(0,1)$ and $\tau\in(0,1)$, the inequality
\begin{equation}
\label{eq:f_tau}
f_\tau(p) \le f_{\mathrm{VS}}(p)
\end{equation}
holds. Note, however, that the function $f_{\mathrm{VS}}(p)$ is not a valid e-calibrator.

Since function $f_\tau(\cdot)$ is decreasing, the corresponding inverse satisfies the inequality
$f_\tau^{-1}(y)
\le
f_{\mathrm{VS}}^{-1}(y)$
for all $y \ge 1$.
Furthermore, the inverse of the Vovk-Sellke bound is given by
\begin{equation}
\label{eq:vs_proof}
f_{\mathrm{VS}}^{-1}(y)
=
\exp\!\left(
W\!\left(-\frac{1}{ey}\right)
\right)
\end{equation}
for $p\leq e^{-1}$.
Substituting $y=K/(\delta|\mathcal K|)$ in \eqref{eq:vs_proof} and plugging 
the result into the denominator of \eqref{eq:compare_width_general_vs_split_rho}, 
with $\log(2/\delta)$ in the numerator, yields the condition \eqref{eq:vs_condition}. Given (\ref{eq:f_tau}), the inequality (\ref{eq:compare_width_general_vs_split_rho}) for any $\tau$ implies (\ref{eq:vs_condition}), proving the necessity of condition (\ref{eq:vs_condition}). Furthermore, using (\ref{eq:tau_argmax}), we obtain that the maximizing choice of the power-calibrator parameter at level $y=K/(\delta|\mathcal K|)$ is \eqref{eq:optimal_calibrator}. Therefore, when $|\mathcal K|$ is fixed, condition \eqref{eq:vs_condition} ensures that this choice of $\tau$ makes \eqref{eq:compare_width_general_vs_split_rho} hold, completing the proof.

\section{Alternative Implementation of PS-DME via Berk--Jones CDF Bands}
\label{app:generalization}

The proof of Theorem~\ref{theorem:validity_full} in Appendix~\ref{app:theorem_proof}
shows that PS-DME applies to any valid test of the null hypothesis
$\mathcal H_{0,k}(F)$ in \eqref{eq:insample_null}. While Sec.~\ref{sec:method}
uses the Kolmogorov statistic $T_k(F)$ in \eqref{eq:T_insample} together with
the DKW inequality \eqref{eq:dkw_split}, one may instead use likelihood-ratio-type goodness-of-fit
statistics such as the Berk--Jones (BJ) statistic \citep{berk1979goodness}.

Define the BJ statistic for a candidate CDF $F$ as
\begin{equation}
\label{eq:bj_stat_short}
S_{n_k}(F)
=
\sup_{x \in \R}
K\!\bigl(\widehat F_k(x),\,F(x)\bigr),
\end{equation}
where $K(\cdot,\cdot)$ is the Bernoulli KL divergence defined as
\begin{equation}
\label{eq:kl_bernoulli_app}
K(a,b)
=
a\log\!\left(\frac{a}{b}\right)
+
(1-a)\log\!\left(\frac{1-a}{1-b}\right),
\qquad a,b\in(0,1).
\end{equation}

Under the null hypothesis $\mathcal H_{0,k}(F)$ and assuming that $F$ is continuous,
the probability integral transform implies that the transformed variables
$U_{k,i} = F(X_{k,i})$ are i.i.d.\ $\mathrm{Unif}[0,1]$
\citep{shorack2009empirical}. Let $\widehat G$ denote the empirical CDF of
$\{U_{k,i}\}_{i=1}^{n_k}$. Then, under the null, the statistic $S_{n_k}(F)$ in
\eqref{eq:bj_stat_short} has the same distribution as
\begin{equation}
\label{eq:bj_null_stat}
S_{n_k}^{(0)}
=
\sup_{u \in [0,1]}
K\!\bigl(\widehat G(u),\,u\bigr),
\end{equation}
i.e., the BJ statistic computed from an i.i.d.\ sample drawn from
$\mathrm{Unif}[0,1]$.

Let $q_{n_k}(1-\alpha)$ denote the $(1-\alpha)$ quantile of the null distribution of random variable $S_{n_k}^{(0)}$ in \eqref{eq:bj_null_stat}. Then, by test
inversion \citep{owen1995nonparametric,jager2007goodness}, the acceptance region
\begin{equation}
S_{n_k}(F) \le q_{n_k}(1-\alpha)
\end{equation}
defines a valid $(1-\alpha)$ confidence set for the true CDF $F_k$. This condition is
equivalent to requiring that, for all $x \in \R$, the value
$F(x)$ lies in the set of all $u \in [0,1]$ satisfying
\begin{equation}
K\!\bigl(\widehat F_k(x),\,u\bigr)
\le
q_{n_k}(1-\alpha).
\end{equation}
Accordingly, the lower and upper bounds at each $x$ are obtained by solving
this inequality for $u$, yielding confidence bands
\begin{equation}
L_k(x) \le F_k(x) \le U_k(x), \qquad \forall x \in \R,
\end{equation}
which depend on the local value $\widehat F_k(x)$ and are generally asymmetric. To guarantee post-selection FCR control, the nominal level is set to $\alpha = f^{-1}\!\left(K/(\delta|\mathcal{K}|)\right)$, consistent with 
the e-value inversion in \eqref{eq:eby_threshold_general_new}.

Compared to the DKW-based bands in (\ref{eq:e_lower})--(\ref{eq:e-upper}),
whose width is the input-independent quantity $\varepsilon_k$ in
\eqref{eq:width_general_new}, the BJ bands are input-dependent. In particular,
they are typically tighter in the tails and wider around the center of the CDF,
reflecting the geometry of the KL divergence: deviations are penalized more
strongly when $\widehat F_k(x)$ is close to $0$ or $1$, and more weakly near
$\widehat F_k(x)\approx 1/2$ \citep{jager2007goodness}. This improved adaptivity comes at the cost of
computing the critical value $q_{n_k}(1-\alpha)$, e.g., via No\'e's recursion
\citep{noe1972calculation}, and solving a one-dimensional root-finding problem
for each value of $\widehat F_k(x)$.

We finally compare the default DKW-based implementation of PS-DME in
(\ref{eq:e_lower})--(\ref{eq:e-upper}) with the BJ-based implementation
described above. Fig.~\ref{fig:bj_vs_dkw_synth}(a) shows the confidence bands
obtained by the two methods for a representative hyperparameter. The BJ-based
band is tighter in the tails of the CDF and wider around the center, in
agreement with the discussion above.

The impact of this difference on model evaluation is illustrated in
Fig.~\ref{fig:bj_vs_dkw_synth}(b), which reports the post-selection best
guaranteed KPI as a function of the
target outage probability $\gamma$. The BJ-based implementation provides a
stronger guarantee for most outage levels, particularly in the tail regimes.
In contrast, for intermediate outage probabilities, the DKW-based
implementation can yield slightly tighter guarantees, reflecting its uniform
band width in \eqref{eq:width_general_new}.

\begin{figure}[t]
    \centering
    \begin{subfigure}[t]{0.48\linewidth}
        \centering
        \includegraphics[width=\linewidth]{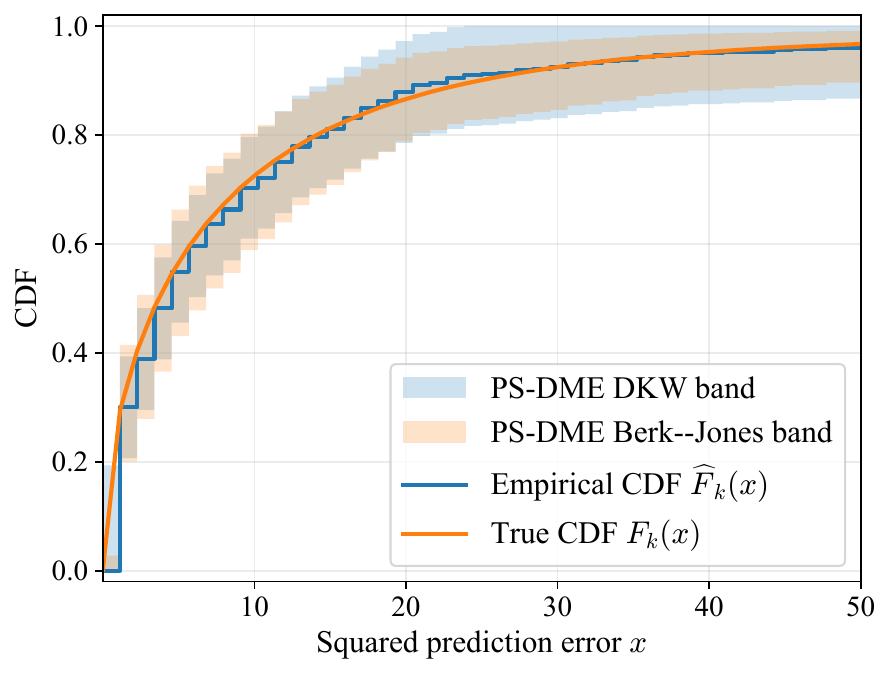}
        \caption{}
    \end{subfigure}
    \hfill
    \begin{subfigure}[t]{0.48\linewidth}
        \centering
        \includegraphics[width=\linewidth]{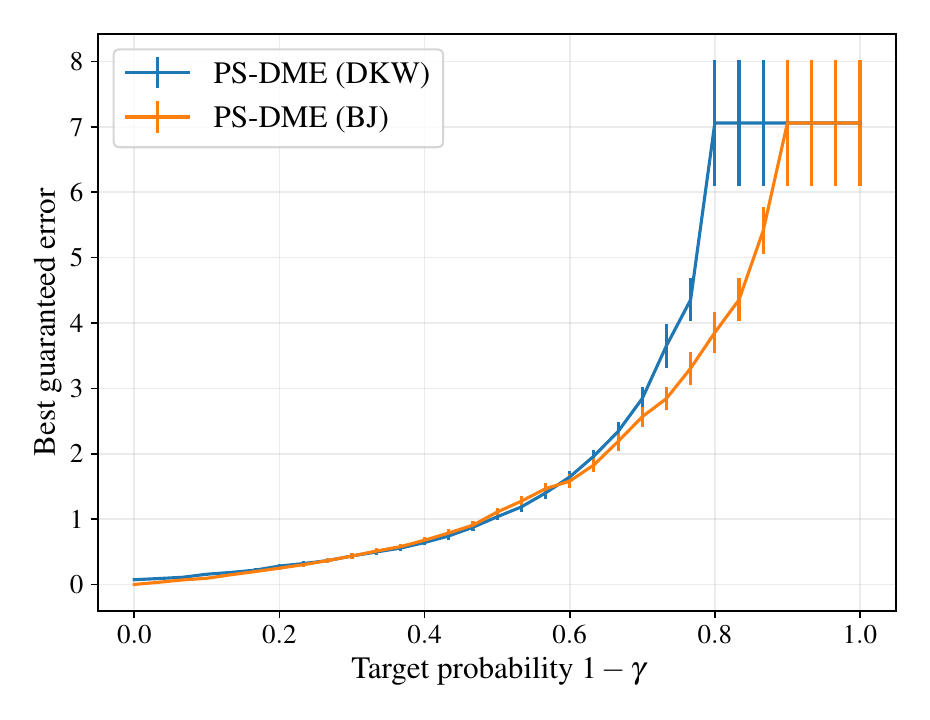}
        \caption{}
    \end{subfigure}
    \caption{
    Comparison between the DKW-based and Berk--Jones-based implementations of PS-DME.
    (a) CDF confidence bands for a representative hyperparameter;
    (b) Best guaranteed KPI as a function of the outage probability $\gamma$.
    }
    \label{fig:bj_vs_dkw_synth}
\end{figure}

\section{Additional Experiments}

\subsection{Empirical FCR for the Synthetic Data Experiment}
\label{app:fcr_table}

Table~\ref{tab:fcr} reports the empirical FCR for SS-DME and PS-DME 
on the synthetic experiment of Sec.~\ref{sec:synthetic}, 
averaged over 100 random replications at target level $\delta = 0.1$.
SS-DME uses a split ratio $n^\mathrm{sel}/n = 0.5$.
Both methods satisfy the FCR requirement (\ref{eq:fcr_guarantee}).

\begin{table}[h!]
\centering
\caption{Empirical FCR for the synthetic experiment at $\delta = 0.1$, 
averaged over 100 replications.}
\label{tab:fcr}
\begin{tabular}{llc}
\toprule
$n$ & Method & Empirical FCR \\
\midrule
\multirow{2}{*}{20}  & SS-DME  & 0.0401 \\
                     & PS-DME  & 0.0116 \\
\midrule
\multirow{2}{*}{100} & SS-DME  & 0.0923 \\
                     & PS-DME  & 0 \\
\bottomrule
\end{tabular}
\end{table}

\subsection{Evaluating and Selecting Telecom Network Configurations}
\label{sec:exp_oran}

We evaluate the proposed PS-DME on a wireless networking dataset derived from an
Open Radio Access Network (O-RAN) deployment \citep{bonati2021orancommag}.

\subsubsection{Dataset and system setup}

We use the dataset introduced in
\citep{bonati2021orancommag}, which was collected on the Colosseum wireless
network emulator at Northeastern University \citep{polese2023understanding}.
The dataset contains measurements from an O-RAN deployment consisting of
four base stations operating with a channel bandwidth of $3$ MHz, consisting of
$15$ physical resource blocks (PRBs), and serving $40$ user equipments (UEs).
Traffic is organized into three slices representing distinct service
classes, namely enhanced mobile broadband (eMBB), massive machine-type communications (MTC), and ultra-reliable low-latency communications (URLLC) \citep{popovski20185g}. The dataset reports periodic slice-level measurements, including the
downlink buffer occupancy and downlink transmission rate for each slice, across different network configurations, which specify (\textit{i}) the scheduling policy used for each slice; and (\textit{ii}) the initial allocation of radio resources, i.e., PRBs, across slices. Specifically, the dataset specifies $18$ predefined configurations, each corresponding to one particular combination of one of three schedulers and of one of six initial allocations for the three slices. We adopt the downlink
buffer occupancy for the URLLC slice as the (negatively
oriented) KPI $X_{k,i}$. The ground truth CDF of this KPI is estimated for each configuration by using
$90\%$ of the available data.

Model pre-selection is performed by selecting
the six configurations with the smallest empirical mean buffer occupancy for the URLLC slice.

\subsubsection{Results}

Fig.~\ref{fig:oran_cdf_band} illustrates representative post-selection
CDF bands for a selected O-RAN configuration for SS-DME and PS-DME, with SS-DME using half of the calibration data for pre-selection.
The horizontal axis corresponds to the buffer occupancy KPI, while the vertical axis shows the corresponding
CDF.
As in the previous experiments, SS-DME yields
wider bands compared to PS-DME. Additionally, because SS-DME uses fewer calibration samples to estimate the empirical CDF, its empirical CDF exhibits larger deviations from the proxy true CDF than that of PS-DME. Note also that, since SS-DME uses a subset of the calibration dataset for evaluating the empirical CDF, the empirical CDFs for SS-DME and PS-DME are different.

\begin{figure}[t]
\centering
\includegraphics[width=0.75\linewidth]{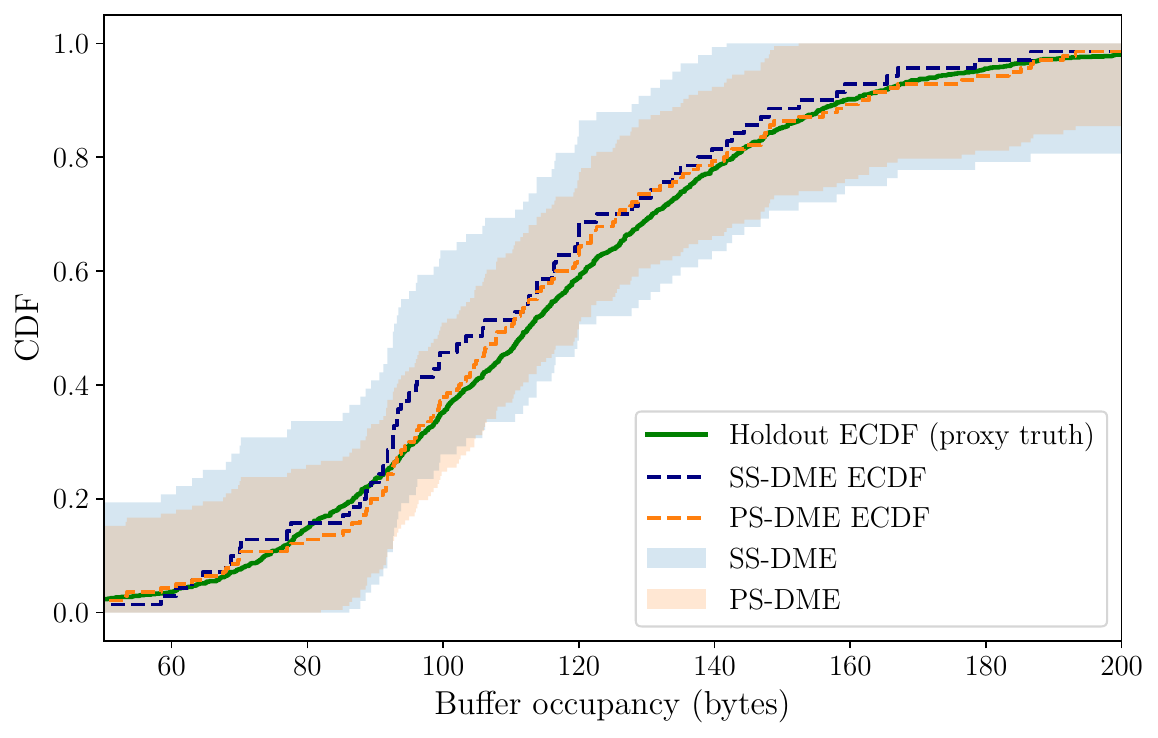}
\caption{
Representative post-selection CDF bands for the O-RAN buffer occupancy
experiment for a selected configuration for PS-DME, and SS-DME with dataset split ratio $|\mathcal{D}^\text{sel}|/|\mathcal{D}| = 0.5$.
}
\label{fig:oran_cdf_band}
\end{figure}

We then analyze the post-selection guaranteed KPI.
Fig.~\ref{fig:oran_best_quantile} reports the best guaranteed buffer occupancy
as a function of the target probability $1-\gamma$ for PS-DME, as well as SS-DME with dataset split ratios $n^\text{sel}/n$ from the set $\{0.2, 0.3, 0.4\}$.
The proposed PS-DME consistently yields stronger
guarantees than the SS-DME variants.

\begin{figure}[t]
\centering
\includegraphics[width=0.6\linewidth]{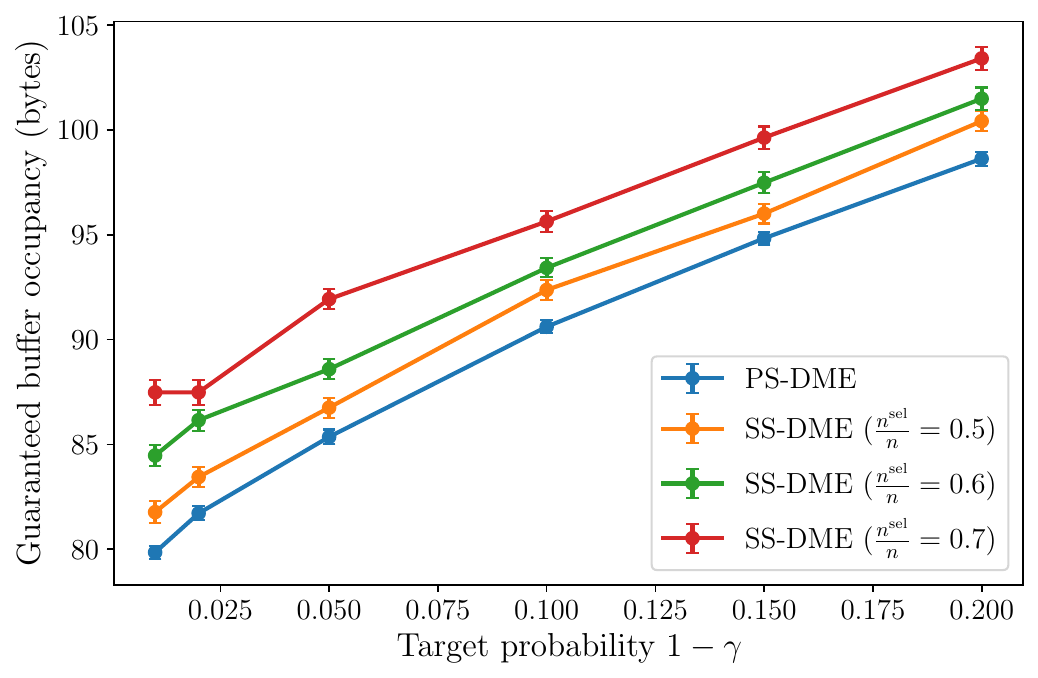}
\caption{
Best guaranteed buffer occupancy for the O-RAN experiment as a function of
the target probability $1-\gamma$ for PS-DME and SS-DME with dataset split ratio $n^\text{sel}/n\in\{0.2, 0.3, 0.4\}$. The vertical error bars show the standard error of the mean across 100 random splits of the dataset.
}
\label{fig:oran_best_quantile}
\end{figure}

To further investigate the trade-off induced by sample splitting, we fix two target
probabilities, $1-\gamma \in \{0.7, 0.8\}$, and vary the split ratio
$n^{\mathrm{sel}}/n$ used by SS-DME. Fig.~\ref{fig:oran_split_tradeoff} reports the resulting best guaranteed buffer
occupancy as a function of the split ratio. The two panels correspond to different
calibration regimes: Fig.~\ref{fig:oran_split_tradeoff}(a) uses $30\%$ of
the available data for calibration, while Fig.~\ref{fig:oran_split_tradeoff}(b)
uses the entire available dataset. In each panel, we report results for both
target probabilities $1-\gamma = 0.7$ and $1-\gamma = 0.8$.

In the low-calibration regime shown in Fig.~\ref{fig:oran_split_tradeoff}(a),
SS-DME consistently yields a larger best guaranteed buffer occupancy than
PS-DME across all split ratios and both target probabilities. This reflects the
fact that splitting an already limited calibration dataset reduces the effective
sample size available for both model selection and inference, leading to more
conservative guarantees and hence worse (larger) guaranteed KPI values. In contrast, in the high-calibration regime shown in
Fig.~\ref{fig:oran_split_tradeoff}(b), the trade-off becomes more nuanced.
For intermediate values of the split ratio, SS-DME can outperform PS-DME for the
less stringent target $1-\gamma = 0.7$, achieving a smaller guaranteed buffer
occupancy. This improvement arises because allocating a moderate portion of the
data to selection can lead to better configuration choices, while the remaining
data are still sufficient to yield tight confidence bands. However, this advantage
does not persist uniformly across all split ratios or for the more stringent target
$1-\gamma = 0.8$, where the need for precise distributional estimation favors
PS-DME, which leverages the full dataset.

\begin{figure}[t]
\centering
\begin{subfigure}[t]{0.48\linewidth}
    \centering
    \includegraphics[width=\linewidth]{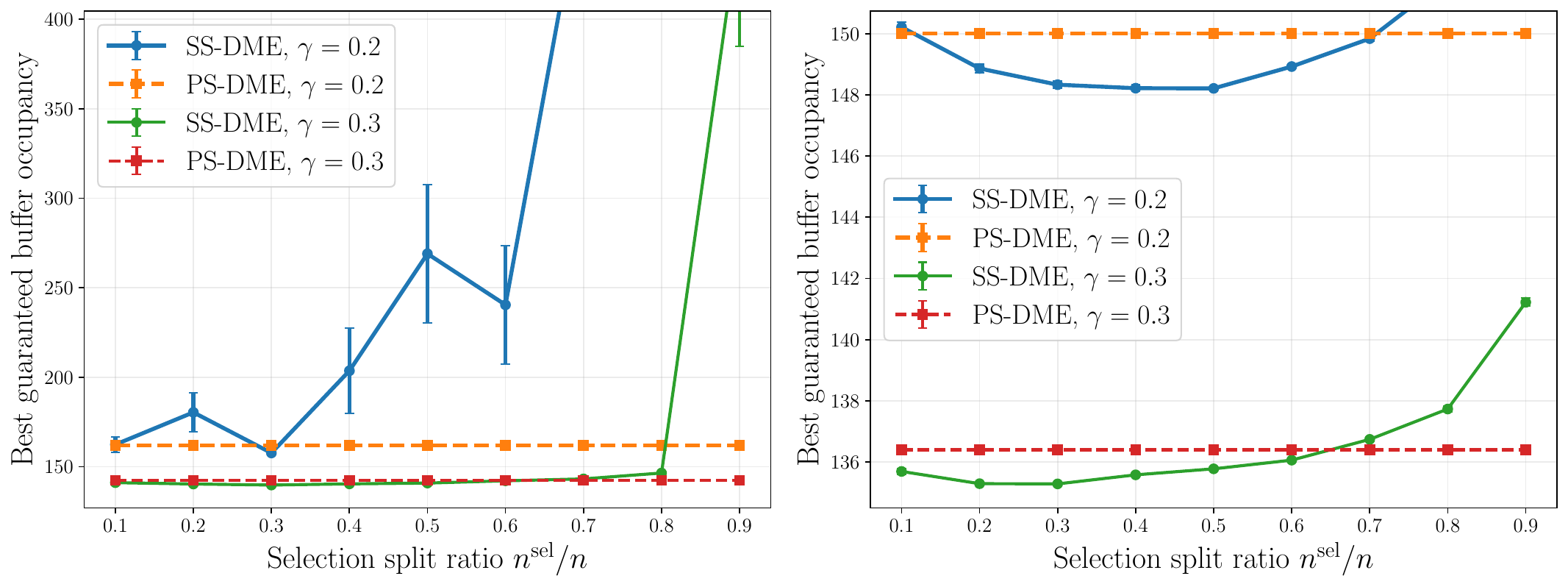}
    \caption{}
    \label{fig:oran_split_tradeoff_kpi}
\end{subfigure}
\hfill
\begin{subfigure}[t]{0.48\linewidth}
    \centering
    \includegraphics[width=\linewidth]{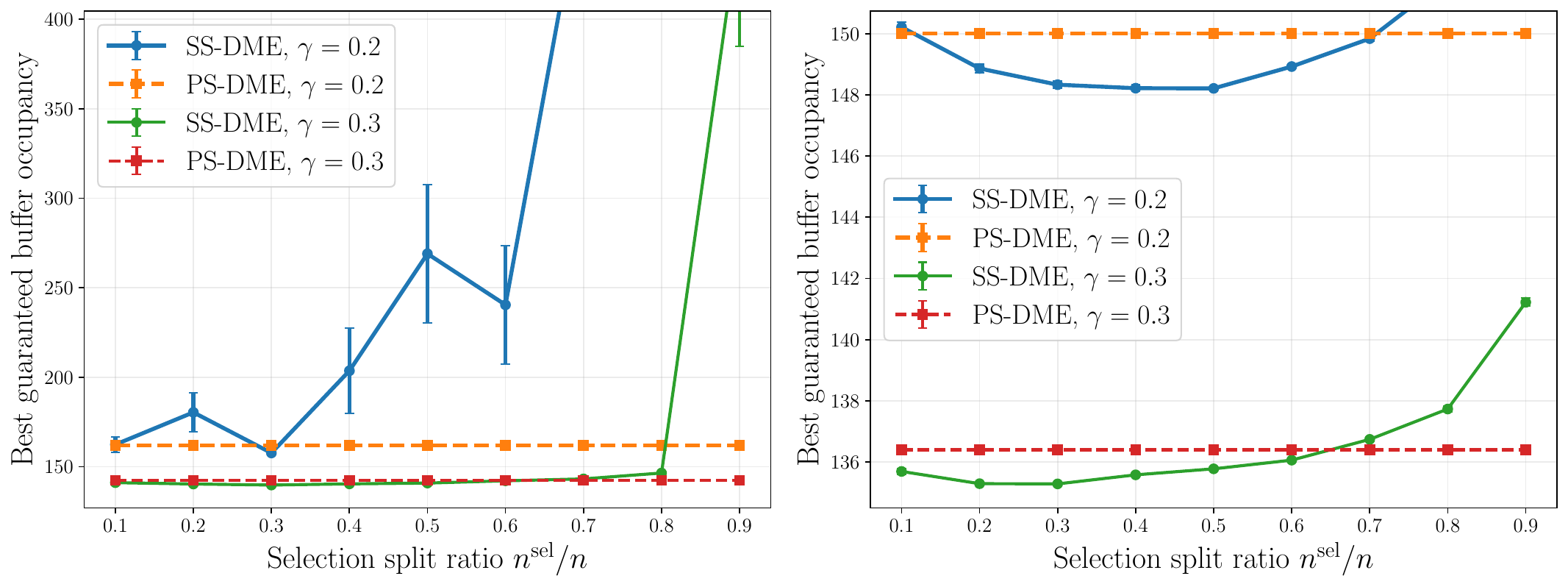}
    \caption{}
    \label{fig:oran_split_tradeoff_width}
\end{subfigure}
\caption{
Effect of the SS-DME split ratio $n^\text{sel}/n$
on the best guaranteed buffer occupancy in the O-RAN experiment, for two target
probabilities $1-\gamma \in \{0.7, 0.8\}$. Each panel corresponds to a different calibration regime, with (a) $30\%$ and (b) $100\%$ of the available data used for calibration. The vertical error bars show the standard error of the mean across 1,000 random splits of the calibration dataset into pre-selection and evaluation datasets.
}
\label{fig:oran_split_tradeoff}
\end{figure}

\subsection{Post-Selection CDF Inference for Server Latency}
\label{sec:real_logs}

We next evaluate the proposed PS-DME confidence bands on response-time distributions extracted from HTTP
server access logs.
Latency is a canonical KPI in web services and
distributed systems, where configurations are routinely compared and selected
based on empirical performance measured on the same data
\citep{dean2013tail,barroso2019datacenter}.
This setting provides a natural testbed for post-selection CDF inference, as
distributional information, rather than a single scalar metric, is often of
primary interest.

\subsubsection{Setup}
We consider a collection of HTTP server access logs obtained from a publicly available server log
dataset released on Kaggle and commonly used for benchmarking log analysis
and anomaly detection methods \citep{kaggle_server_logs}.
Each log entry records, among other fields, the HTTP request method, the
request path, and the response time measured at the server.
In this experiment, the hyperparameters correspond to discrete
request-handling configurations defined by ordered pairs
$(\text{HTTP method}, \text{request path})$.
The HTTP method takes values in
$\{\texttt{GET}, \texttt{POST}, \texttt{PUT}, \texttt{DELETE}\}$, which comprise
all methods observed in the dataset.
The request path is defined as the URL path component of the request line, with
query parameters removed, and takes values in the finite set of distinct paths
observed across the log file.

To construct a finite hyperparameter grid, we enumerate all method--path pairs
appearing in the dataset and compute their total frequency.
We retain only those pairs that appear at least $N_{\min} = 200$ times, ensuring that
each candidate configuration admits sufficiently many samples for stable
empirical CDF estimation.
Among the remaining pairs, we select the $K = 30$ most frequent configurations to
form a fixed hyperparameter grid, which is determined once and held constant
across all experimental repetitions.
Each configuration $k$ in this grid induces an unknown latency distribution
$P_k$ corresponding to the response times of requests matching the associated
method--path pair.

Within each experimental repetition, the data corresponding to the fixed
hyperparameter grid are randomly split into a calibration set and a holdout set.
Specifically, $30\%$ of the data are allocated to calibration and the
remaining $70\%$ to holdout.
Using the calibration data, we compute the empirical mean latency for each
configuration and select a subset
$\mathcal{K} \subseteq \{1,\dots,K\}$
consisting of the $M = 12$ configurations with the smallest empirical mean latency.
This selection rule is fully data-dependent and reuses the same samples later
employed to construct confidence bands.

For each selected configuration $k \in \mathcal{K}$, we construct two-sided uniform CDF
confidence bands using the methodology described in
Section~\ref{sec:method}. To study the effect of the e-calibrator on both inferential tightness and
post-selection reliability, we compare three members of the power-family
of e-calibrators (\ref{eq:power_e-calibrator}),
and consider three parameters
$\tau \in \{1/3,\; 1/2,\; 2/3\}$.
For each calibrator, the corresponding band width is computed according to
\eqref{eq:width_general_new}, and the same selected set $\mathcal{K}$ is used across all
calibrators within each repetition to enable a fair comparison.

\subsubsection{Results}
Because the true latency distributions are unknown, we assess post-selection
validity using the empirical CDF computed on the holdout set as a proxy for the
true CDF.
A band is declared to fail if the holdout empirical CDF exits the proposed
interval at any point.
We repeat the entire procedure over 300 independent random splits and compute
the FCP for each repetition.

\begin{figure}[h!]
    \centering
    \includegraphics[width=0.75\linewidth]{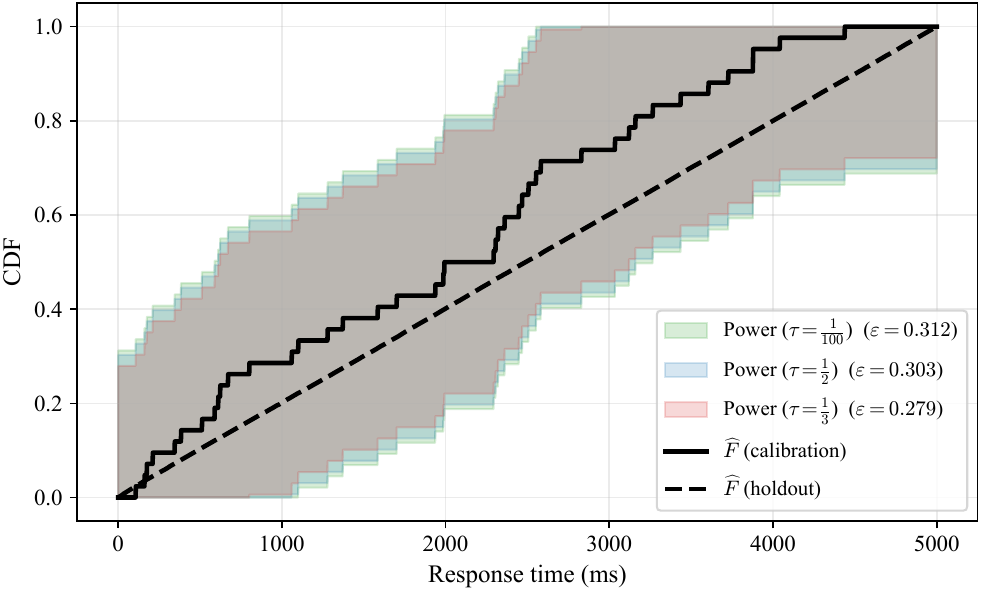}
    \caption{Representative post-selection CDF bands for a selected server
    configuration using three different e-calibrators.
    The empirical CDF is common across all curves, while the shaded regions
    correspond to different values of parameter $\tau$ in (\ref{eq:power_e-calibrator}).}
    \label{fig:cdf_band_real}
\end{figure}

Fig.~\ref{fig:cdf_band_real} compares post-selection CDF bands constructed using
three different e-calibrators for a representative selected configuration.
All curves share the same empirical CDF, while the shaded regions correspond to
the confidence bands induced by the different values of $\tau$. The resulting band widths vary with the choice of calibrator.
In our experiment, the tightest band among the tested calibrators is obtained
for $\tau = 1/3$, while both larger and smaller values of $\tau$ produce
slightly wider bands. This behavior reflects the trade-off introduced by the
e-calibrator in the inversion of the e-value tests: different calibrators
allocate evidence differently across the range of $p$-values, which in turn
affects the resulting confidence radius in \eqref{eq:width_general_new}.
These results illustrate that the choice of calibrator can have a noticeable
effect on inferential tightness, and that intermediate values of $\tau$
may provide the most informative bands in practice.


\section{Compute Resources}
\label{app:compute}
The synthetic data experiments (Sec.~\ref{sec:synthetic}), the O-RAN telecom experiments (Appendix~\ref{sec:exp_oran}), and the HTTP server log experiment (Appendix~\ref{sec:real_logs}) were run on a 2021 MacBook Pro with an Apple M1 Pro chip and 16GB of RAM. LLM inference for the text-to-SQL experiment (Sec.~\ref{sec:exp_spider_llm}) was performed on a single NVIDIA A100 GPU (80GB VRAM).

\section{Code Availability}
\label{app:code}
The code to reproduce all experiments in this paper is available at \url{https://github.com/amirfar76/PS-DME}.





\begin{thebibliography}{41}
\providecommand{\natexlab}[1]{#1}
\providecommand{\url}[1]{\texttt{#1}}
\expandafter\ifx\csname urlstyle\endcsname\relax
  \providecommand{\doi}[1]{doi: #1}\else
  \providecommand{\doi}{doi: \begingroup \urlstyle{rm}\Url}\fi

\bibitem[Angelopoulos et~al.(2022)Angelopoulos, Bates, Fisch, Lei, and Schuster]{angelopoulos2022conformal}
Anastasios~N Angelopoulos, Stephen Bates, Adam Fisch, Lihua Lei, and Tal Schuster.
\newblock Conformal risk control.
\newblock \emph{arXiv preprint arXiv:2208.02814}, 2022.

\bibitem[Angelopoulos et~al.(2025)Angelopoulos, Bates, Cand{\`e}s, Jordan, and Lei]{angelopoulos2025learn}
Anastasios~N Angelopoulos, Stephen Bates, Emmanuel~J Cand{\`e}s, Michael~I Jordan, and Lihua Lei.
\newblock Learn then test: Calibrating predictive algorithms to achieve risk control.
\newblock \emph{The Annals of Applied Statistics}, 19\penalty0 (2):\penalty0 1641--1662, 2025.

\bibitem[Barroso et~al.(2019)Barroso, H{\"o}lzle, and Ranganathan]{barroso2019datacenter}
Luiz~Andr{\'e} Barroso, Urs H{\"o}lzle, and Parthasarathy Ranganathan.
\newblock \emph{The datacenter as a computer: Designing warehouse-scale machines}.
\newblock Springer Nature, 2019.

\bibitem[Bates et~al.(2023)Bates, Cand{\`e}s, Lei, Romano, and Sesia]{bates2023testing}
Stephen Bates, Emmanuel Cand{\`e}s, Lihua Lei, Yaniv Romano, and Matteo Sesia.
\newblock Testing for outliers with conformal p-values.
\newblock \emph{The Annals of Statistics}, 51\penalty0 (1):\penalty0 149--178, 2023.

\bibitem[Benjamini and Yekutieli(2005)]{benjamini2005false}
Yoav Benjamini and Daniel Yekutieli.
\newblock False discovery rate--adjusted multiple confidence intervals for selected parameters.
\newblock \emph{Journal of the American Statistical Association}, 100\penalty0 (469):\penalty0 71--81, 2005.

\bibitem[Berk and Jones(1979)]{berk1979goodness}
Robert~H Berk and Douglas~H Jones.
\newblock Goodness-of-fit test statistics that dominate the Kolmogorov statistics.
\newblock \emph{Zeitschrift f{\"u}r Wahrscheinlichkeitstheorie und verwandte Gebiete}, 47\penalty0 (1):\penalty0 47--59, 1979.

\bibitem[Bonati et~al.(2021)Bonati, D'Oro, Polese, Basagni, and Melodia]{bonati2021orancommag}
Leonardo Bonati, Salvatore D'Oro, Michele Polese, Stefano Basagni, and Tommaso Melodia.
\newblock Intelligence and learning in {O-RAN} for data-driven nextg cellular networks.
\newblock \emph{IEEE Communications Magazine}, 59\penalty0 (10):\penalty0 21--27, 2021.

\bibitem[Brown et~al.(2020)Brown, Mann, Ryder, Subbiah, Kaplan, Dhariwal, Neelakantan, Shyam, Sastry, Askell, et~al.]{brown2020language}
Tom Brown, Benjamin Mann, Nick Ryder, Melanie Subbiah, Jared~D Kaplan, Prafulla Dhariwal, Arvind Neelakantan, Pranav Shyam, Girish Sastry, Amanda Askell, et~al.
\newblock Language models are few-shot learners.
\newblock \emph{Advances in neural information processing systems}, 33:\penalty0 1877--1901, 2020.

\bibitem[Chakraborty et~al.(2026)Chakraborty, Lee, and Katsevich]{chakraborty2026comparing}
Abhinav Chakraborty, Junu Lee, and Eugene Katsevich.
\newblock Comparing three learn-then-test paradigms in a multivariate normal means problem.
\newblock \emph{arXiv preprint arXiv:2601.07764}, 2026.

\bibitem[Chugg et~al.(2026)Chugg, Gauthier, Jordan, Ramdas, and Waudby-Smith]{chugg2026post}
Ben Chugg, Etienne Gauthier, Michael~I Jordan, Aaditya Ramdas, and Ian Waudby-Smith.
\newblock Post-hoc large-sample statistical inference.
\newblock \emph{arXiv preprint arXiv:2603.08002}, 2026.

\bibitem[Dean and Barroso(2013)]{dean2013tail}
Jeffrey Dean and Luiz~Andr{\'e} Barroso.
\newblock The tail at scale.
\newblock \emph{Communications of the ACM}, 56\penalty0 (2):\penalty0 74--80, 2013.

\bibitem[D{\"u}mbgen and Wellner(2023)]{dumbgen2023new}
Lutz D{\"u}mbgen and Jon~A Wellner.
\newblock A new approach to tests and confidence bands for distribution functions.
\newblock \emph{The Annals of Statistics}, 51\penalty0 (1):\penalty0 260--289, 2023.

\bibitem[Dvoretzky et~al.(1956)Dvoretzky, Kiefer, and Wolfowitz]{dvoretzky1956asymptotic}
Aryeh Dvoretzky, Jack Kiefer, and Jacob Wolfowitz.
\newblock Asymptotic minimax character of the sample distribution function and of the classical multinomial estimator.
\newblock \emph{The Annals of Mathematical Statistics}, pages 642--669, 1956.

\bibitem[Farzaneh and Simeone(2025)]{farzaneh2025ensuring}
Amirmohammad Farzaneh and Osvaldo Simeone.
\newblock Ensuring reliability via hyperparameter selection: Review and advances.
\newblock In \emph{2025 33rd European Signal Processing Conference (EUSIPCO)}, pages 1163--1167. IEEE, 2025.

\bibitem[Gr{\"u}nwald(2024)]{grunwald2024beyond}
Peter~D Gr{\"u}nwald.
\newblock Beyond Neyman--Pearson: E-values enable hypothesis testing with a data-driven alpha.
\newblock \emph{Proceedings of the National Academy of Sciences}, 121\penalty0 (39):\penalty0 e2302098121, 2024.

\bibitem[Holtzman et~al.(2019)Holtzman, Buys, Du, Forbes, and Choi]{holtzman2019curious}
Ari Holtzman, Jan Buys, Li~Du, Maxwell Forbes, and Yejin Choi.
\newblock The curious case of neural text degeneration.
\newblock \emph{arXiv preprint arXiv:1904.09751}, 2019.

\bibitem[Huang et~al.(2026)Huang, Fedorov, Gromov, Beckerman, Suda, Eriksson, Balandat, Conway, Huber, Sankar, Dalmia, Liu, Wu, Elgamal, Sagar, Chandra, and Krishnamoorthi]{huang2026scale}
Hanxian Huang, Igor Fedorov, Andrey Gromov, Bernard Beckerman, Naveen Suda, David Eriksson, Maximilian Balandat, Rylan Conway, Patrick Huber, Chinnadhurai Sankar, Ayushi Dalmia, Zechun Liu, Lemeng Wu, Tarek Elgamal, Adithya Sagar, Vikas Chandra, and Raghuraman Krishnamoorthi.
\newblock Scale.
\newblock 2026.
\newblock URL \url{https://arxiv.org/abs/2603.15954}.

\bibitem[Jager and Wellner(2007)]{jager2007goodness}
Leah Jager and Jon~A. Wellner.
\newblock {Goodness-of-fit tests via phi-divergences}.
\newblock \emph{The Annals of Statistics}, 35\penalty0 (5):\penalty0 2018 -- 2053, 2007.
\newblock \doi{10.1214/0009053607000000244}.
\newblock URL \url{https://doi.org/10.1214/0009053607000000244}.

\bibitem[{Kaggle Contributors}(2020)]{kaggle_server_logs}
{Kaggle Contributors}.
\newblock Http server logs dataset.
\newblock \url{https://www.kaggle.com/}, 2020.
\newblock Accessed: 2026.


\bibitem[Koning(2025)]{koning2025posthocalphahypothesistesting}
Nick~W. Koning.
\newblock Post-hoc $\alpha$ hypothesis testing and the post-hoc $p$-value, 2025.
\newblock URL \url{https://arxiv.org/abs/2312.08040}.

\bibitem[Koobs and Koning(2026)]{koobs2026equivalencetestingdatadependentposthoc}
Stan Koobs and Nick~W. Koning.
\newblock Equivalence testing with data-dependent and post-hoc equivalence margins, 2026.
\newblock URL \url{https://arxiv.org/abs/2603.16213}.

\bibitem[Kuchibhotla(2025)]{kuchibhotla2025post}
Arun~Kumar Kuchibhotla.
\newblock Post-selection inference.
\newblock In \emph{International Encyclopedia of Statistical Science}, pages 1920--1924. Springer, 2025.

\bibitem[Lee et~al.(2016)Lee, Sun, Sun, and Taylor]{lee2016exact}
Jason~D Lee, Dennis~L Sun, Yuekai Sun, and Jonathan~E Taylor.
\newblock Exact post-selection inference, with application to the lasso.
\newblock \emph{The Annals of Statistics}, 44\penalty0 (3):\penalty0 907--927, 2016.

\bibitem[Madaan et~al.(2023)Madaan, Tandon, Gupta, Hallinan, Gao, Wiegreffe, Alon, Dziri, Prabhumoye, Yang, et~al.]{madaan2023self}
Aman Madaan, Niket Tandon, Prakhar Gupta, Skyler Hallinan, Luyu Gao, Sarah Wiegreffe, Uri Alon, Nouha Dziri, Shrimai Prabhumoye, Yiming Yang, et~al.
\newblock Self-refine: Iterative refinement with self-feedback.
\newblock \emph{Advances in Neural Information Processing Systems}, 36:\penalty0 46534--46594, 2023.

\bibitem[Massart(1990)]{massart1990tight}
Pascal Massart.
\newblock The tight constant in the Dvoretzky-Kiefer-Wolfowitz inequality.
\newblock \emph{The Annals of Probability}, pages 1269--1283, 1990.

\bibitem[No{\'e}(1972)]{noe1972calculation}
Marc No{\'e}.
\newblock The calculation of distributions of two-sided Kolmogorov-Smirnov type statistics.
\newblock \emph{The Annals of mathematical statistics}, pages 58--64, 1972.

\bibitem[Owen(1995)]{owen1995nonparametric}
Art~B Owen.
\newblock Nonparametric likelihood confidence bands for a distribution function.
\newblock \emph{Journal of the American Statistical Association}, 90\penalty0 (430):\penalty0 516--521, 1995.

\bibitem[Polese et~al.(2023)Polese, Bonati, D’oro, Basagni, and Melodia]{polese2023understanding}
Michele Polese, Leonardo Bonati, Salvatore D’oro, Stefano Basagni, and Tommaso Melodia.
\newblock Understanding {O-RAN}: Architecture, interfaces, algorithms, security, and research challenges.
\newblock \emph{IEEE Communications Surveys \& Tutorials}, 25\penalty0 (2):\penalty0 1376--1411, 2023.

\bibitem[Popovski et~al.(2018)Popovski, Trillingsgaard, Simeone, and Durisi]{popovski20185g}
Petar Popovski, Kasper~Fl{\o}e Trillingsgaard, Osvaldo Simeone, and Giuseppe Durisi.
\newblock 5G wireless network slicing for eMBB, URLLC, and mMTC: A communication-theoretic view.
\newblock \emph{IEEE Access}, 6:\penalty0 55765--55779, 2018.

\bibitem[Ramdas and Wang(2025)]{ramdas2025hypothesis}
Aaditya Ramdas and Ruodu Wang.
\newblock Hypothesis testing with e-values.
\newblock \emph{Foundations and Trends{\textregistered} in Statistics}, 1\penalty0 (1-2):\penalty0 1--390, 2025.

\bibitem[Shorack and Wellner(2009)]{shorack2009empirical}
Galen~R Shorack and Jon~A Wellner.
\newblock \emph{Empirical processes with applications to statistics}.
\newblock SIAM, 2009.

\bibitem[Stephen et~al.(2021)]{stephen2021gentle}
Anastasios N. Angelopoulos and Stephen Bates.
\newblock A gentle introduction to conformal prediction and distribution-free uncertainty quantification.
\newblock \emph{arXiv preprint arXiv: 2107.07511}, 2021.

\bibitem[Taylor and Tibshirani(2018)]{taylor2018post}
Jonathan Taylor and Robert Tibshirani.
\newblock Post-selection inference for $\ell_1$-penalized likelihood models.
\newblock \emph{Canadian Journal of Statistics}, 46\penalty0 (1):\penalty0 41--61, 2018.

\bibitem[Vovk and Wang(2021)]{vovk2021values}
Vladimir Vovk and Ruodu Wang.
\newblock E-values: Calibration, combination and applications.
\newblock \emph{The Annals of Statistics}, 49\penalty0 (3):\penalty0 1736--1754, 2021.

\bibitem[Wang et~al.(2022)Wang, Wei, Schuurmans, Le, Chi, Narang, Chowdhery, and Zhou]{wang2022self}
Xuezhi Wang, Jason Wei, Dale Schuurmans, Quoc Le, Ed~Chi, Sharan Narang, Aakanksha Chowdhery, and Denny Zhou.
\newblock Self-consistency improves chain of thought reasoning in language models.
\newblock \emph{arXiv preprint arXiv:2203.11171}, 2022.

\bibitem[Wasserman(2006)]{wasserman2006all}
Larry Wasserman.
\newblock \emph{All of nonparametric statistics}.
\newblock Springer, 2006.

\bibitem[Xu et~al.(2024)Xu, Wang, and Ramdas]{xu2024post}
Ziyu Xu, Ruodu Wang, and Aaditya Ramdas.
\newblock Post-selection inference for e-value based confidence intervals.
\newblock \emph{Electronic Journal of Statistics}, 18\penalty0 (1):\penalty0 2292--2338, 2024.

\bibitem[Yanchenko et~al.(2025)Yanchenko, Williams, and Martin]{yanchenko2025hypothesis}
Eric Yanchenko, Jonathan~P Williams, and Ryan Martin.
\newblock Hypothesis testing for community structure in temporal networks using e-values.
\newblock \emph{arXiv preprint arXiv:2507.23034}, 2025.

\bibitem[Yang et~al.(2024)Yang, Yang, Hui, Zheng, Yu, Zhou, Li, Li, Liu, Huang, Dong, Wei, Lin, Tang, Wang, Yang, Tu, Zhang, Ma, Yang, Xu, Zhou, Bai, He, Lin, Dang, Lu, Chen, Yang, Li, Xue, Ni, Zhang, Wang, Peng, Men, Gao, Lin, Wang, Bai, Tan, Zhu, Li, Liu, Ge, Deng, Zhou, Ren, Zhang, Wei, Ren, Liu, Fan, Yao, Zhang, Wan, Chu, Liu, Cui, Zhang, Guo, and Fan]{yang2024qwen2technicalreport}
An~Yang, Baosong Yang, Binyuan Hui, Bo~Zheng, Bowen Yu, Chang Zhou, Chengpeng Li, Chengyuan Li, Dayiheng Liu, Fei Huang, Guanting Dong, Haoran Wei, Huan Lin, Jialong Tang, Jialin Wang, Jian Yang, Jianhong Tu, Jianwei Zhang, Jianxin Ma, Jianxin Yang, Jin Xu, Jingren Zhou, Jinze Bai, Jinzheng He, Junyang Lin, Kai Dang, Keming Lu, Keqin Chen, Kexin Yang, Mei Li, Mingfeng Xue, Na~Ni, Pei Zhang, Peng Wang, Ru~Peng, Rui Men, Ruize Gao, Runji Lin, Shijie Wang, Shuai Bai, Sinan Tan, Tianhang Zhu, Tianhao Li, Tianyu Liu, Wenbin Ge, Xiaodong Deng, Xiaohuan Zhou, Xingzhang Ren, Xinyu Zhang, Xipin Wei, Xuancheng Ren, Xuejing Liu, Yang Fan, Yang Yao, Yichang Zhang, Yu~Wan, Yunfei Chu, Yuqiong Liu, Zeyu Cui, Zhenru Zhang, Zhifang Guo, and Zhihao Fan.
\newblock Qwen2 technical report, 2024.
\newblock URL \url{https://arxiv.org/abs/2407.10671}.

\bibitem[Yu et~al.(2018)Yu, Zhang, Yang, Yasunaga, Wang, Li, Ma, Li, Yao, Roman, et~al.]{yu2018spider}
Tao Yu, Rui Zhang, Kai Yang, Michihiro Yasunaga, Dongxu Wang, Zifan Li, James Ma, Irene Li, Qingning Yao, Shanelle Roman, et~al.
\newblock Spider: A large-scale human-labeled dataset for complex and cross-domain semantic parsing and text-to-SQL task.
\newblock \emph{arXiv preprint arXiv:1809.08887}, 2018.

\end{thebibliography}
\end{document}